\newcommand{\Nb}{\mathbb{N}}
\newcommand{\Rb}{\mathbb{R}}
\newcommand{\hb}{\mathbold{h}}
\newcommand{\UPD}{\mathsf{UPD}}
\newcommand{\AGG}{\mathsf{AGG}}
\newcommand{\REL}{\mathsf{RELABEL}}
\renewcommand{\vec}[1]{\mathbold{#1}}
\newcommand{\oms}{\{\!\!\{}
\newcommand{\cms}{\}\!\!\}}
\newcommand{\tup}[1]{{(#1)}}
\DeclareFontFamily{U}{mathx}{\hyphenchar\font45}
\DeclareFontShape{U}{mathx}{m}{n}{<-> mathx10}{}
\DeclareSymbolFont{mathx}{U}{mathx}{m}{n}
\DeclareMathAccent{\widebar}{0}{mathx}{"73}
\newcommand{\bftab}{\fontseries{b}\selectfont}
\newcommand{\utab}[1]{\underline{\smash{#1}}}
\newtheorem{theorem}{Theorem}
\newcommand{\GPSEp}{GPSE$^{\boldsymbol{+}}$}
\newcommand{\GPSEm}{GPSE$^{\boldsymbol{-}}$}
\title{Towards Graph Foundation Models: A Study on the Generalization of Positional and Structural Encodings}
\author{%
\name Billy Joe Franks\thanks{Equal contribution.} \email \texttt{billy.franks@rptu.de}\\
\addr University of Kaiserslautern-Landau
\AND
\name Moshe Eliasof\footnotemark[1]\\
\addr University of Cambridge
\AND
\name Semih Cantürk\\
\addr Université de Montréal\\
Mila - Quebec AI Institue
\AND
\name Guy Wolf\\
\addr Université de Montréal\\
Mila - Quebec AI Institue
\AND
\name Carola-Bibiane Schönlieb\\
\addr University of Cambridge
\AND
\name Sophie Fellenz\\
\addr University of Kaiserslautern-Landau
\AND
\name Marius Kloft\\
\addr University of Kaiserslautern-Landau
}
\begin{document}

\maketitle

\begin{abstract}
Recent advances in integrating positional and structural encodings (PSEs) into graph neural networks (GNNs) have significantly enhanced their performance across various graph learning tasks.
However, the general applicability of these encodings and their potential to serve as foundational representations for graphs remain uncertain.
This paper investigates the fine-tuning efficiency, scalability with sample size, and generalization capability of learnable PSEs across diverse graph datasets.
Specifically, we evaluate their potential as universal pre-trained models that can be easily adapted to new tasks with minimal fine-tuning and limited data. 
Furthermore, we assess the expressivity of the learned representations, particularly, when used to augment downstream GNNs.
We demonstrate through extensive benchmarking and empirical analysis that PSEs generally enhance downstream models.
However, some datasets may require specific PSE-augmentations to achieve optimal performance.
Nevertheless, our findings highlight their significant potential to become integral components of future graph foundation models. We provide new insights into the strengths and limitations of PSEs, contributing to the broader discourse on foundation models in graph learning.
\end{abstract}

\section{Introduction}

In recent years, graph neural networks (GNNs) and machine learning on graphs have gained significant attention due to their versatility in processing complex data across various domains, including biology, social networks, chemistry, and recommendation systems. Comprehensive surveys on these advancements are available in \citet{bronstein2021geometric} and \citet{waikhom2023survey}. GNNs are fundamentally designed to extend traditional deep learning techniques to graph-structured data by aggregating information from local neighborhoods, thereby enabling the extraction of structural and relational features intrinsic to graphs. Despite considerable progress, these methods continue to face several challenges, including issues related to expressivity, such as the ability to distinguish between non-isomorphic graphs \citep{morris2019weisfeiler} and substructure counting \citep{bevilacqua2021equivariant}, the generalization of models across different graph datasets and tasks \citep{ju2023generalization}, as well as the connection between generalization and expressivity that has been studied recently \citep{li2024towards}.

In terms of expressivity, i.e., the ability of GNNs to approximate functions \citep{huang2021short}, message-passing neural networks (MPNNs) are known to have limitations that can hinder their performance in practical applications \citep{morris2019weisfeiler, xu2019how,morris2021weisfeiler,zhang2021labeling}.
To address these limitations, various strategies have been proposed, with two prominent approaches focusing on augmenting the input node features of a graph with positional and structural encodings (PE and SE, respectively).
One such approach involves using random node features (RNF) \citep{abboud2020surprising,sato2021random}, which have been shown to enhance the expressivity of MPNNs by increasing their ability to distinguish between nodes. However, despite the theoretical benefits, RNF has not consistently improved performance on real-world datasets \citep{bevilacqua2021equivariant,bechlerspeicher2025invariancenodeidentifiersgraph}. To address this inconsistency, recent studies have demonstrated that learned PEs \citep{eliasof2023graph} and learned graph normalization techniques \citep{eliasof2024granola} can leverage RNF to improve both expressivity and downstream task performance.
Another approach to overcoming the expressivity limitations involves incorporating spectral PEs, which are often derived from a partial eigendecomposition of the graph-Laplacian matrix \citep{dwivedi2020generalization,kreuzer2021rethinking,wang2022equivariant,lim2022sign,rampavsek2022recipe}. Additionally, structural encodings based on random walks \citep{dwivedi2022graph} have also proven to be beneficial.
More recently, the graph positional and structural encoder (GPSE) \citep{liu2023graph} introduced a unified framework for learning from both positional and structural encodings while utilizing RNF. Overall, these methods have shown improvements, particularly on specific graph benchmarks where models are trained from scratch for each task. Notably,  GPSE also demonstrated some degree of generalization to new tasks.

Simultaneously with the advancements in GNNs, foundation models (FM) have emerged and revolutionized the fields of natural language processing (NLP) and computer vision (CV). These developments present an exciting opportunity to bring similar transformative advances to machine learning on graphs. In NLP, models like BERT \citep{devlin2018bert} and GPT \citep{brown2020language} have demonstrated how pre-training on large datasets can produce models that generalize effectively to a variety of downstream tasks with minimal to no fine-tuning. Inspired by these successes, recent works \citep{liu2023towards,galkin2023towards,NEURIPS2023_34dce0dc,mao2024graph,shi2024graph,xia2024opengraph,liu2024one} have begun exploring the creation of graph foundation models (GFM), which could offer similar benefits for graph-based tasks. Such models would not only simplify the training process across multiple tasks but also significantly reduce the computational resources, energy consumption, and data requirements typically associated with task-specific training. This aligns with the broader objective of developing more environmentally sustainable machine learning practices.

However, building full-scale foundation models for graphs presents unique challenges. Unlike textual or visual data, which often exhibit relatively uniform structures, such as 2D grids with fixed resolutions in images, graphs can vary widely in their topology, size, and complexity. Additionally, the input feature space for graphs is highly heterogeneous; while pre-trained models for images typically operate on a common input space (e.g., RGB values), different graph datasets often have vastly different input features, ranging from molecular atom types to paper attributes in citation networks. This variability across datasets complicates the design of a universal, pre-trained graph model.
Given these challenges, while diverse input features remain an open issue, a promising direction is to focus on learning foundational graph representations that capture key structural properties inherent in graph data. In particular, this paper emphasizes the role of positional and structural encodings (PSEs), which enhance node identifiability within a graph and offer a more practical approach to developing versatile graph encoders that can generalize across various tasks and datasets. This work is inspired by the growing body of research on pre-trained encodings for graphs, particularly the recent work on the graph positional and structural encoder (GPSE), which introduces a graph encoder specifically designed to capture diverse PSEs.

\textbf{Our Contributions.} In this paper, we both practically and theoretically assess whether PSEs can function as foundation models or, alternatively, as a building block for future foundation models. We especially focus on the recent GPSE model which we consider the state of the art when it comes to PSEs. We note that, our main contribution is the study of the role of GPSE as a building block within future GFMs, rather than proposing a new methodology. We believe that these contributions are useful and important for the graph-learning community. To be precise we:
\begin{itemize} 
\item Discuss the expressivity of GPSE as a standalone model, as well as general PSEs contribution to expressivity when they are used to augment downstream GNN models (\cref{sec:expresiveness}). 
\item Show that GPSE accelerates convergence to better performance. Specifically, it achieves strong results more quickly due to its pre-training phase (\cref{sec:convergence}). 
\item Determine whether sample size affects model performance when using GPSE and other PSEs, providing insights into their effectiveness in data-scarce settings (\cref{sec:samplesize}). 
\item Analyze and compare various PSE's performance across different datasets to assess their generalization capability. The evaluation indicates that GPSE seems to perform generally best, while some failure cases still remain (\cref{sec:generalization}).
\end{itemize}
Our findings indicate that while GPSE and other PSEs do not yet function as standalone GFMs due to their performance being somewhat dependent on the dataset in question, they show potential as valuable components in future GFMs due to their increased robustness in data-scarce settings and their improved generalization. This potential is primarily attributed to GPSE's strong generalization capabilities, which we identify as a critical factor for enhancing downstream performance (\cref{sec:generalization}). These results underscore the importance of studying generalization in graph learning, a key area for advancing the development and understanding of GFMs, as also highlighted by \citet{morris2024position}.

\section{Related Work}
This work studies the connection between GFMs, PSEs, and expressivity, in GNNs. We now discuss related work on these topics.

\textbf{Graph foundation models.}
Inspired by the success of FMs in NLP \citep{brown2020language} and CV \citep{dosovitskiy2020image}, there has been increasing interest in developing similar models for graph-based tasks. These GFMs aim to generalize across various graph-related tasks by leveraging large-scale pre-training and adaptable architectures. 
\citet{liu2023towards} provide a comprehensive analysis of the current state of GFMs. Their study highlights significant challenges in the development of general GFMs, noting that the diversity of graph structures and tasks makes it difficult to create models with broad applicability. 
\citet{shi2024graph} emphasize the potential impact of GFMs on a wide range of applications, from social network analysis to molecular modeling. Also, \citet{mao2024graph} discuss the potential directions for designing effective GFMs, and propose a taxonomy of task-specific GFMs, outlining the key design principles required to achieve generalization across various graph tasks. 
\citet{xia2024opengraph, liu2024one}, and \citet{NEURIPS2023_34dce0dc} propose GFMs by use of graph-tokenization, text-attributed graphs, and in-context learning for graphs. These models have difficulties with balancing task specificity with generalization capabilities or regression tasks.
A promising example of a domain-specific GFM is presented in \citet{galkin2023towards}, a GFM tailored for knowledge graph tasks was shown to be successful, excelling in tasks such as knowledge graph completion and link prediction, showcasing how specialized GFMs can achieve state-of-the-art results in specific domains. However, there is still a need to develop GFMs for other domains like biology and chemistry, where PSEs are commonly applied, as recently discussed in \citet{frasca2024towards}.

On a different note, recently, there has been a series of works that utilize large language models (LLMs) as FMs for graph learning tasks, such as in \citet{fatemi2024talk,perozzi2024let} and others. We note that, this direction is very promising in terms of the large availability of pre-trained models, as well as the abundance of textual data used in LLMs, compared with the scarcity of graph data. However, in this paper, we focus on employing GNNs and concepts within, to study whether and how they can be oriented towards GFMs, which can be later combined with LLM approaches.

\textbf{Positional and structural encodings.}
PSEs are features that provide information on the graph's structure and node positions. The seminal work in \citet{vaswani2017attention} introduced the concept of positional encodings in the transformer architecture, demonstrating their effectiveness in capturing the order of sequences. In graph learning, the inclusion of PSEs was shown to be significant for enhancing the performance of models in various tasks \citep{Srinivasan2020On}, as we now discuss. Methods utilizing graph Laplacian eigenvectors as PEs (LapPE) have been explored by \citet{kreuzer2021rethinking} and \citet{rampavsek2022recipe}. \citet{kreuzer2021rethinking} also investigated the use of electrostatic potential encodings, providing a novel approach to structural representation. \citet{ying2021transformers} proposed using shortest paths as encodings, which have shown promise in capturing graph structures. \citet{shiv2019novel} introduced tree-based encodings, offering a unique perspective on hierarchical data representation. Random walk-based encodings (RWSE) have been explored by \citet{rampavsek2022recipe, dwivedi2020generalization} and \citet{dwivedi2021graph}, highlighting their versatility in various applications. The use of heat kernels for SEs has been discussed by \citet{kreuzer2021rethinking} and \citet{mialon2021graphit}. Subgraph-based encodings, which have been effective in capturing local graph structures, were the focus of \citet{zhao2021stars, bouritsas2022improving, chen2022structure} and \citet{yan2023cycle}. Additionally, \citet{ying2021transformers} considered node degree as a simple yet powerful SEs. Furthermore, a different set of works has shown that RNF \citep{abboud2020surprising, sato2021random, franks2023systematic} are theoretically strong PEs, and they were further utilized in \citet{eliasof2023graph} for learnable PEs via their propagation. Recently, it was shown in \citet{liu2023graph} that RNF can be combined with the aforementioned PSEs to learn a general encoding, called graph positional and structural encoding (GPSE). While PSEs have shown great promise in improving performance on various tasks, their supposed effects on GFMs has been under-explored, with preliminary experimental results shown in \citet{frasca2024towards}. In this paper, we offer a detailed study on the generalization ability of learned PSEs, accompanied by a theoretical discussion of their expressiveness.

\textbf{Expressivity of GNNs}
The expressivity of GNNs has been widely studied in recent years \citep{xu2019how, morris2019weisfeiler, maron2019provably}, highlighting the theoretical limits of GNNs. To improve the expressivity of GNNs, several approaches were proposed and studied, from substructure encodings \citep{bouritsas2022improving}, subgraph GNNs \citep{bevilacqua2021equivariant,zhao2021stars, bevilacqua2024efficient}, 
to homomorphism counting \citep{zhang2024beyond}, to the augmentation of input node features with PSEs such as RNF \citep{dasoulas2019coloring,murphy2019relational,sato2021random, abboud2020surprising, franks2023systematic}. In this work, following GPSE that utilizes RNF as part of its architecture, we theoretically discuss the expressive power of GPSE, and study whether RNF are required in practice for downstream tasks when utilizing a framework such as GPSE. A further discussion of expressivity not considered here can be found in \cref{app:expressivity}.

\section{Notations and Background}
Throughout this paper, we consider (undirected) graphs. An undirected graph $G$ is defined by the pair $(V(G),E(G))$ with \emph{finite} sets of
nodes $V(G)$, and edges $E(G) \subseteq \{\{u,v\}|u,v\in V(G)\}$. W.l.o.g. we assume that $V(G)=\{1,\dots, n\}.$
Within machine learning on graphs, nodes typically carry feature vectors which lead to undirected labeled graphs $G = (V(G),E(G),\ell),$ where $\ell: V(G) \to \mathbb R^d$ assigns each node a feature vector.

\textbf{Color refinement (CR)}\label{subsec:1WL} CR is a well-studied heuristic for the graph isomorphism problem, originally proposed by~\citet{leman1968reduction}. Intuitively, the algorithm determines if two graphs are non-isomorphic by iteratively coloring or labeling vertices. Formally, let $G = (V(G),E(G),\ell)$ be a labeled graph. In each iteration, $t > 0$, CR computes a vertex coloring $C^1_t \colon V(G) \to \Nb$, depending on the coloring of the neighbors. That is, in iteration $t>0$, we set
\begin{equation*}
	 C^1_t(v) \coloneqq\REL\Big(\!\big(C^1_{t-1}(v),\oms C^1_{t-1}(u) \mid u \in N(v)  \cms \big)\! \Big),
\end{equation*}
for all vertices $v \in V(G)$, where $\REL$ injectively maps the above pair to a unique natural number, which has not been used in previous iterations. In iteration $0$, the coloring $C^1_{0}\coloneqq \ell$ is used, which can be converted to a natural number in a multitude of ways. To test whether two graphs $G$ and $H$ are non-isomorphic, we run the above algorithm in ``parallel'' on both graphs. If the two graphs have a different number of vertices colored $c \in \Nb$ at some iteration, CR distinguishes the graphs as non-isomorphic. Moreover, if the number of colors between two iterations, $t$ and $(t+1)$, does not change, i.e., the cardinalities of the images of $C^1_{t}$ and $C^1_{i+t}$ are equal, the algorithm terminates. For such $t$, we define the stable coloring as $C^1_{\infty}(v) = C^1_t(v)$, for $v \in V(G\,\dot\cup H)$.

\textbf{Message-passing neural networks (MPNNs)} Intuitively, MPNNs learn a vectorial representation, i.e., a $d$-dimensional real-valued vector, representing each node in a graph by aggregating information from neighboring vertices.
Formally, let $G = (V(G),E(G),\ell)$ be a labeled graph with initial node features $\ell(v)=:\hb_{v}^\tup{0} \in \Rb^{d}$. A typical MPNN architecture is obtained by stacking permutation-equivariant parameterized neural layers. Following,~\citet{scarselli2008graph} and~\citet{gilmer2017neural}, in each layer, $l \geq 0$,  we define the updated node features as:
\begin{equation}\label{def:MPNN}
	\hb_{v}^\tup{l+1} \coloneqq \UPD^\tup{l+1}\Bigl(\hb_{v}^\tup{l},\AGG^\tup{l+1} \bigl(\oms \hb_{u}^\tup{l}
	\mid u\in N(v) \cms \bigr)\Bigr) \in \Rb^{d},
\end{equation}
for each $v\in V(G)$. The functions  $\UPD^\tup{l}$ and $\AGG^\tup{l}$ are typically learned. Two examples of common MPNN layers, that will be considered in this work, are the graph isomorphism network (GIN) layer \citep{xu2019how} defined as:
\begin{equation}
\vec h_u^{(l+1)} = \text{MLP}^{(l+1)}\left((1+\epsilon^{(l+1)})\vec h_u^{(l)}+\sum_{\{u,v\}\in E(G)} \vec h_v^{(l)}\right),
\end{equation}
where $\text{MLP}^{(l+1)}$ is a multilayer perceptron (MLP) in layer $l+1$ and $\epsilon^{(l+1)}\in\mathbb R$ is a constant of layer $l+1$ differentiating a node from its neighbors.
Further, the gated graph convolutional network (GatedGCN) layer \citep{bresson2017residual} is defined as:
\begin{equation}
\vec{h}_v^{(l+1)} = \text{ReLU}\left(U^{(l)}\vec{h}_v^{(l)}+\sum_{\{u,v\}\in E(G)} \eta_{uv}^{(l)}\odot V^{(l)}\vec{h}_v^{(l)}\right),
\end{equation}
where $\eta_{uv}^{(l)} := \sigma(A^{(l)}\vec{h}_u^{(l)}+B^{(l)}\vec{h}_v^{(l)})$, $U^{(l)}, V^{(l)}, A^{(l)},B^{(l)}$ are linear layer weight matrices, and $\sigma$ is the sigmoid activation.

\subsection{Graph positional and structural encoder (GPSE)}

GPSE~\citep{liu2023graph} is a recently introduced GNN model that aims to learn and predict robust and transferable representations for positional and structural encodings using RNFs as inputs, and constitutes the main architecture studied in our paper.  In this paper, we choose to work with GPSE because it generalizes other PSEs, such as the graph Laplacian eigenvectors or random-walk structural encodings, in the sense that it learns these PSEs during its training phase. In what follows, we provide a brief standalone introduction to the model; further information is available via the original paper.

\textbf{GPSE model.} GPSE employs an encoder-decoder architecture in which the encoder and decoder are trained jointly, whereas in inference, only the encoder is used. The encoder is a deep MPNN consisting of stacked GatedGCN~\citep{bresson2017residual} layers, with residual gating and skip-connections in-between layers. The decoder consists of multiple node-level MLP heads, each corresponding to a different PSE component.

The inputs to the model are graphs that are preprocessed to add a virtual node, and original node features replaced with 20-dimensional RNFs, $\vec x_i \sim \mathcal{N}(\mathbf{0}, \mathbf{I}) \in \mathbb{R}^{1 \times 20}$, which are first projected to match the inner dimension of the encoder, $d$ (512 in the original paper):
\begin{equation}
    \vec h_i^{(0)} = \text{ReLU}\Big( \vec x_i W_\text{inp} \Big)
\end{equation}
The resulting hidden representations are then passed through $L$ GatedGCN layers:
\begin{equation}
    \vec h_i^{(l+1)} = \text{ReLU} \bigg(
    \vec h_i^{(l)} W_1^{(l)}
    + \sum_{j \in \mathcal{N}(i)} \sigma \Big( \vec h_i^{(l)} W_2^{(l)} + \vec h_j^{(l)} W_3^{(l)} \Big) \odot \Big( \vec h_j^{(l)} W_4^{(l)} \Big)
    \bigg)
\end{equation}
where $W_i^{(l)} \in \mathbb{R}^{d \times d}$ are learnable parameters for layer $l$, $\sigma$ is the sigmoid function, and $\odot$ represents elementwise multiplication. The output of the encoder is a latent representation $\vec h_i^{(L)}$, which is decoded at each head by a two-layer MLP to predict the respective PSE component (e.g. absolute value of the second eigenvector, or the fifth RWSE) for each node:
\begin{equation}
    \hat{y}_{i,k} = \text{ReLU}\Big( \vec h_i^{(L)} W_{k,1} \Big) W_{k,2}
\end{equation}
with $W_{k,1} \in \mathbb{R}^{d \times d}$ and $W_{k,2} \in \mathbb{R}^{d \times 1}$ are learnable parameters.

GPSE additionally learns graph-level encodings in the absolute values of the graph Laplacian eigenvalues and cycle counts; to do so the authors sum the node-level representations to obtain graph-level ones, which are then similarly passed through a 2-layer MLP per PSE.

From here on we will assume that, given the structure of a graph $G = (V(G),E(G))$, namely without any labels, as well as some random features $X$, GPSE provides node embeddings $\text{GPSE}(G, X)(v).$ Throughout most of this paper we consider GPSE to be pre-trained and thus fixed. We thus think of GPSE as a PSE that provides node embeddings.

\textbf{PSEs.} In this work, we will be comparing multiple PSEs, including GPSE. Here we give a brief overview of the most relevant PSEs. The Laplacian eigenvector positional encoding (LapPE) \citep{dwivedi2020benchmarkgnns,lim2022sign} is the absolute value of the $l_2$ normalized eigenvector associated with non-trivial eigenvalues of the graph Laplacian. The random walk structural encoding (RWSE) \citep{dwivedi2021graph} is the probability of returning back to the starting state of a random walk after $k$ steps.
We refer to the combination of all PSEs used in \citet{liu2023graph} as AllPSE. Lastly, we compare to Local-instance and Global-semantic Learning (GraphLog) \citep{pmlr-v139-xu21g}, which constructs a locally smooth latent space by aligning the embeddings of correlated graphs/subgraphs.
More details and some additional less relevant PSEs can be found in \cref{app:PSE},

\textbf{Downstream universality.} Throughout this work, we will be considering the expressivity that node augmentations (or positional and structural encodings) provide to GNNs. For example, as previously shown in \citet{sato2021random, franks2023systematic}, random node features (RNF) added to GNN based on GIN layers enable the resulting model's universality. In this sense, RNF itself is not universal but we will refer to this as ``RNF enables downstream universality''. More generally, when we refer to the downstream universality or downstream expressivity of some PSE, then we are referring to the universality or expressivity that a GNN, that is able to approximate 1-WL, attains when its input features are augmented by this PSE.

\section{The Expressivity of GPSE}
\label{sec:expresiveness}
In this section, we discuss the extent and relevance of expressivity for GPSE and other PSEs. We start by proving that GPSE using GatedGCN layers can universally approximate PSEs. We then discuss that PSEs do not enable downstream universality. In doing so, we also propose two additional GPSE-like models, that we call \GPSEm, and \GPSEp, that verify the statements we make.

\subsection{GPSE is a Universal Node Encoder}

Since GPSE is meant to be able to compute PSEs for any input graph and thus provide a powerful positional and structural node encoding, it is important to verify that GPSE is indeed a universal node encoder. Because some PSEs are harder---in a WL sense---to compute than CR, GPSE as an MPNN needs to be fully expressive. And indeed, GPSE is a universal node encoder, because the input graph is augmented with RNF, which we now show. First, note that GatedGCN layers can approximate GIN layers by the following theorem, which we prove in Appendix~\ref{appendix:proofs}.

\begin{theorem}\label{thm:GatedGCN}
Under mild assumptions on the input graphs, an MPNN consisting of sufficiently many GatedGCN layers can approximate an MPNN made up of GIN layers arbitrarily well.
\end{theorem}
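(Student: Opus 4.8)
The plan is to show that a single GatedGCN layer can simulate a single GIN layer by exploiting the gating nonlinearity to recover the plain sum aggregation, and then to conclude for stacked layers by composing the per-layer approximations. Recall the GIN update
\begin{equation*}
\vec h_u^{(l+1)} = \text{MLP}^{(l+1)}\!\left((1+\epsilon^{(l+1)})\vec h_u^{(l)}+\sum_{\{u,v\}\in E(G)} \vec h_v^{(l)}\right),
\end{equation*}
and the GatedGCN update, whose message term is $\sum_{j\in\mathcal N(i)} \sigma(A\vec h_i + B\vec h_j)\odot(V\vec h_j)$. The key observation is that if we can make the gate $\sigma(A\vec h_i^{(l)}+B\vec h_j^{(l)})$ approximately equal to the all-ones vector for every edge $\{i,j\}$, the GatedGCN message term collapses to $\sum_{j\in\mathcal N(i)} V\vec h_j^{(l)}$, i.e. a linear map applied to the neighbor sum, which is exactly the aggregation GIN performs. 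The gate can be pushed uniformly close to $1$ by choosing $A=B=0$ and adding a large positive bias inside $\sigma$ (this is the ``mild assumption'' at work: hidden features must lie in a bounded region, which holds once the inputs are bounded and each layer is Lipschitz, so the pre-activation is uniformly bounded and $\sigma$ of a large constant is uniformly near $1$).

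First I would make this precise for one layer. Fix a compact set $K\subseteq\mathbb R^d$ containing all node features that arise on the (finitely many, bounded) input graphs under consideration. Given a target GIN layer with its MLP and $\epsilon$, I would: (i) set the GatedGCN gate parameters so that $\sigma(\cdot)$ is within $\delta$ of $\mathbf 1$ uniformly on $K$; (ii) choose $W_4^{(l)}=V^{(l)}=I$ (or whatever linear map realizes the identity on the neighbor features), so the message term is within $O(\delta\cdot\deg)$ of the exact neighbor sum $\sum_{j\in\mathcal N(i)}\vec h_j^{(l)}$; (iii) choose $W_1^{(l)}$ to realize the scalar $(1+\epsilon^{(l+1)})$ on the self term, so that the argument of the outer ReLU is within $O(\delta)$ of $(1+\epsilon)\vec h_i^{(l)}+\sum_{j}\vec h_j^{(l)}$; and (iv) absorb the outer $\text{MLP}^{(l+1)}$ into subsequent GatedGCN layers acting as a pure node-wise MLP — a GatedGCN layer with zero gate, or more cleanly with $W_4=0$, reduces to $\text{ReLU}(\vec h_i W_1)$, and stacking a few such layers gives an arbitrary node-wise MLP to any desired accuracy by the universal approximation theorem, applied on the compact set $K$. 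The one subtlety is that GatedGCN's outer activation is ReLU rather than identity, so I cannot directly output the pre-MLP sum; instead I keep the sum in an expanded set of coordinates (e.g. by duplicating and negating, using $x=\text{ReLU}(x)-\text{ReLU}(-x)$) so that the information survives the ReLU and is then fed to the MLP-simulating block. Since everything lives on a compact set and all maps are Lipschitz, the per-coordinate errors propagate linearly.

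Then I would pass from one layer to a stack. Suppose the target is an $L$-layer GIN. Approximating layer by layer, the error after layer $l$ is controlled by the approximation error introduced at layer $l$ plus the Lipschitz constant of the remaining GatedGCN layers times the incoming error; a standard telescoping/induction argument then bounds the total error on the final node embeddings by a constant (depending on $L$, the Lipschitz constants, the graph degrees, and $K$) times $\max_l \delta_l$, which we drive to zero by choosing the gate biases large enough and the MLP-simulating blocks deep enough. This yields uniform approximation of the GIN network's node outputs over the class of input graphs, which is the claim.

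The main obstacle I anticipate is not the gating trick itself but making the ``mild assumptions on the input graphs'' do exactly the right work: one needs a uniform compact domain $K$ and uniformly bounded degrees so that (a) the gate can be made uniformly close to $\mathbf 1$, and (b) the universal approximation of the outer MLP is uniform across all graphs simultaneously. If the graph family has unbounded size or degree, the neighbor sums are unbounded and no fixed compact $K$ exists, so the statement genuinely needs a boundedness hypothesis — I would state it as: node features come from a fixed compact set and node degrees are bounded by a fixed constant (equivalently, the family of input graphs is finite, which is the setting implicitly in play for finite benchmarks). A secondary, purely bookkeeping obstacle is the ReLU-vs-identity mismatch in passing the aggregated quantity to the MLP block, handled by the duplicate-and-negate encoding above; it costs only a constant factor in width.
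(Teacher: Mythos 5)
Your overall strategy is the same as the paper's: neutralize the gate so the GatedGCN message term reduces to a plain neighbor sum, carry the aggregated quantity through the outer ReLU with the duplicate-and-negate encoding $x=\text{ReLU}(x)-\text{ReLU}(-x)$, realize the GIN's MLP with additional GatedGCN layers whose message term is switched off ($V=0$), and telescope the per-layer errors on a bounded domain. There is, however, one concrete step that fails as written: you propose to push the gate uniformly close to $\mathbf 1$ by ``choosing $A=B=0$ and adding a large positive bias inside $\sigma$,'' but the gate is defined as $\eta_{uv}=\sigma(A\vec h_u+B\vec h_v)$ with no bias parameter, so with $A=B=0$ you get $\sigma(0)=\tfrac12\mathbf 1$, not something near $\mathbf 1$, and there is no knob to add a constant. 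This is exactly where the paper's ``mild assumption'' does its work: it assumes every node carries a constant nonzero feature (coordinate fixed to $1$), so that the first column of $A$ acts as the missing bias; the paper then sets that column to $\sigma^{-1}(1-\alpha)$, making the gate \emph{exactly} $(1-\alpha)\mathbf 1$, and the only error is the known scalar $(1-\alpha)$, which commutes with ReLU and contributes $\alpha\max_{l,i}\lVert h_i^{(l)}\rVert$ to the final bound. Your version is repairable either by adopting that constant-feature assumption or by simply accepting the exact constant gate $\tfrac12\mathbf 1$ and rescaling $V$ by $2$; but as stated, you have identified the wrong hypothesis as the ``mild assumption'' (you cite only compactness and bounded degree, whereas the constant feature, or a constant linear combination of features, is the load-bearing one for the gate).

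Two smaller points of comparison. First, you invoke the universal approximation theorem to realize the GIN's MLP with gate-free layers; this is unnecessary, since a GatedGCN layer with $V=0$ computes $\text{ReLU}(U\vec h)$ node-wise and the paper simply copies the MLP's weight matrices $W_1,W_2$ into $U^{(l'+1)},U^{(l'+2)}$ (padded with the duplicate-and-negate blocks), giving an exact simulation of the MLP rather than an approximation. Second, because the paper's gate is exact rather than $\delta$-close to $\mathbf 1$, it needs no bounded-degree hypothesis for the aggregation step --- only boundedness of $\max_{l,i}\lVert h_i^{(l)}\rVert$ --- whereas your $O(\delta\cdot\deg)$ error bound does. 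Neither of these is an error, but both make your argument heavier than necessary.
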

Combining \cref{thm:GatedGCN} with Theorem 3 from \citet{xu2019how}, which states that an MPNN made up of GIN layers can learn to approximate 1-WL,  proves that GatedGCN can also simulate 1-WL. Furthermore, Theorem 4 from \citet{franks2023systematic} shows that, assuming sufficiently many layers and sufficient width of the layers within GPSE, any function on the graph structure (that is, functions of the form $f((V,E))$ ignoring node/edge features) can be computed.
This proves that, indeed, under mild assumptions, GPSE is universal and can learn to embed any PSE.

\subsection{PSEs do not Guarantee Downstream Universality}

\begin{table}[b]
\centering
\caption{The inclusion of hard graphs in \GPSEp pre-training is crucial to the model's success on expressivity datasets. Without RNF, \GPSEm cannot perform well on these datasets. We report the accuracy (\%)$\uparrow$ of \GPSEm, GPSE, and \GPSEp.}
\begin{tabular*}{\linewidth}{@{\extracolsep{\fill}}lcccccc}
\toprule
Method$\downarrow$ \ \textbackslash \ Dataset$\rightarrow$&EXP&CEXP&CSL&TRI&TRIX\\\midrule 
\GPSEm
&49.8$\pm$0.3&\utab{71.9$\pm$2.9}&\phantom{0}9.3$\pm$3.3\phantom{0}&74.8$\pm$1.1&\bftab{95.1$\pm$0.0\phantom{0}}\\GPSE&61.3$\pm$2.3&\utab{72.3$\pm$2.6}&42.7$\pm$10.2&83.6$\pm$0.5&31.1$\pm$30.7\\\GPSEp&\bftab{73.6$\pm$1.1}&\bftab{74.8$\pm$2.3}&\bftab{68.0$\pm$6.2\phantom{0}}&\bftab{96.3$\pm$0.5}&\utab{89.6$\pm$8.5\phantom{0}}\\
\bottomrule
\end{tabular*}
\label{tab:WL}
\end{table}

While GPSE can learn to embed any PSE, this is not sufficient to ensure the universality of the downstream model. In fact, PSEs in general do not provide universality of the downstream model.
As shown in \citet{liu2023graph}, as well as in \cref{tab:WL}, a GIN downstream network augmented with GPSE is not able to accurately generalize on the EXP, CEXP, CSL, TRI, and TRIX datasets, that are often used as expressivity benchmarks \citep{abboud2020surprising, murphy2019relational}.
Let us assume that the pre-trained network in GPSE learned an identity mapping of the input random node features. Then, the resulting model is theoretically universal with high probability, as shown in \citet{abboud2020surprising} and \citet{sato2021random}. This would imply that the issue lies in the pre-training of GPSE. That is, GPSE did not learn to encode PSEs which obtain high accuracy on some graphs that require high expressivity to distinguish.

\textbf{\GPSEp.} We empirically verify that by adding relevant ``hard'' graphs, i.e., graphs that require high expressivity, to the pre-training dataset of GPSE, GPSE is practically more expressive. Specifically, relevant hard graphs for the CSL dataset are 4-regular graphs. Thus we define \GPSEp to be the exact same model as GPSE trained on MolPCBA (>300.000 unique graphs), as well as 1000 random 4-regular graphs, each with 24 nodes. We chose 24 nodes, as this is roughly the average number of nodes for MolPCBA. Indeed, \cref{tab:WL} shows that, by adding these hard graphs to the pre-training stage, the performance of GPSE on CSL increases significantly. The performance on EXP also increases; however, since the added graphs are not particularly relevant to the EXP dataset, the performance does not change as dramatically as in the case of CSL. We also evaluated on TRI and TRIX, which are 3-regular graphs with triangles as labels, where again \GPSEp outperforms GPSE. We note, that the use of 4-regular graphs was merely a demonstration focusing on the CSL dataset. In practice, either a task-tailored or a more general set of hard graphs would need to be used in the pretraining stage.

An interesting question that follows is, \textit{If the pre-training of GPSE is done as in \GPSEp, is a GNN augmented with GPSE universal?} As stated before, if GPSE were to learn to output random features, the universality would be given for a sufficiently powerful downstream GNN. However, since GPSE is actually trained to embed PSEs and not to output random features, we can assume that the GPSE computed node-embeddings contain at most marginal amounts of randomness, regardless of whether a GPSE or a \GPSEp model is used. Alternatively, if, instead, GPSE does not output random features, then this implies that GPSE will, at best, learn to output node embeddings that represent the orbit partition (not considering the input colors). Under this assumption, $\text{GPSE}(G, X)(v)=\text{GPSE}(G, X)(w)$ if and only if $v$ and $w$ are in the same orbit. However, under this assumption, there exist graphs $A$ and $B$ which a GPSE-augmented MPNN cannot distinguish.

\begin{figure}[b]
\centering
\begin{subfigure}{0.2\textwidth}
\centering
\resizebox{.88\textwidth}{!}{\begin{tikzpicture}

\foreach \i in {0, 1, 2, 3, 4, 5} {
    \node[draw, circle, fill=black, inner sep=1.5pt] (v\i) at ({30+60*\i}:1) {};
}
\draw[thick] (v0) -- (v1) -- (v2) -- (v3) -- (v4) -- (v5) -- (v0);

\end{tikzpicture}}
\caption{}
\label{fig:grapha}
\end{subfigure}
\hspace{2em}
\begin{subfigure}{0.2\textwidth}
\centering
\resizebox{.88\textwidth}{!}{\begin{tikzpicture}

\foreach \i in {0, 1, 2, 3, 4, 5} {
    \node[draw, circle, fill=black, inner sep=1.5pt] (v\i) at ({30+60*\i}:1) {};
}
\draw[thick] (v0) -- (v1) -- (v2) -- (v0) ;
\draw[thick] (v3) -- (v4) -- (v5) -- (v3) ;

\end{tikzpicture}}
\caption{}
\label{fig:graphb}
\end{subfigure}
\hspace{2em}
\begin{subfigure}{0.2\textwidth}
\centering
\resizebox{\textwidth}{!}{\begin{tikzpicture}

\foreach \i in {0, 1, 2, 3, 4, 5, 6, 7, 8, 9, 10, 11} {
    \node[draw, circle, fill=black, inner sep=1.5pt] (v\i) at ({30*\i}:1) {};
    \ifnum\i<3 \relax
        \node[draw, circle, fill=red, inner sep=1.5pt] (v\i) at ({30*\i}:1) {};
    \fi
    \ifnum\i>5 \ifnum\i<9 \relax
        \node[draw, circle, fill=red, inner sep=1.5pt] (v\i) at ({30*\i}:1) {};
    \fi\fi
}
\draw[thick] (v0) -- (v1) -- (v2) -- (v3) -- (v4) -- (v5) -- (v6) -- (v7) -- (v8) -- (v9) -- (v10) -- (v11) -- (v0);

\draw[thick] (v0) -- (v2) -- (v4) -- (v6) -- (v8) -- (v10) -- (v0);
\draw[thick] (v1) -- (v3) -- (v5) -- (v7) -- (v9) -- (v11) -- (v1);

\end{tikzpicture}}
\caption{}
\label{fig:graphc}
\end{subfigure}
\hspace{2em}
\begin{subfigure}{0.2\textwidth}
\centering
\resizebox{\textwidth}{!}{\begin{tikzpicture}

\foreach \i in {0, 1, 2, 3, 4, 5, 6, 7, 8, 9, 10, 11} {
    \ifodd\i
        \node[draw, circle, fill=red, inner sep=1.5pt] (v\i) at ({30*\i}:1) {};
    \else
        \node[draw, circle, fill=black, inner sep=1.5pt] (v\i) at ({30*\i}:1) {};
    \fi
}

\draw[thick] (v0) -- (v1) -- (v2) -- (v3) -- (v4) -- (v5) -- (v6) -- (v7) -- (v8) -- (v9) -- (v10) -- (v11) -- (v0);

\draw[thick] (v0) -- (v2) -- (v4) -- (v6) -- (v8) -- (v10) -- (v0);
\draw[thick] (v1) -- (v3) -- (v5) -- (v7) -- (v9) -- (v11) -- (v1);

\end{tikzpicture}}
\caption{}
\label{fig:graphd}
\end{subfigure}
\caption{GPSE without RNF cannot differentiate graphs (a) and (b). An MPNN with the orbit partition as node features cannot differentiate graphs (c) and (d).}
\end{figure}
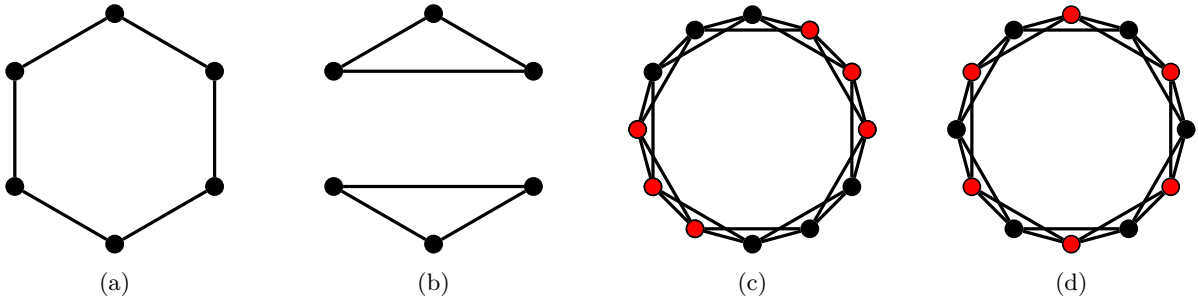
\begin{theorem}\label{thm:PSEs}
Let $f:G\to (V(G)\to\mathbb D)$ be a function such that $f(G)(v)=f(G)(w)$ if $v$ and $w$ are in the same orbit of graph $G$. Then there exist colored graphs $A=(V_A, E_A, \ell_A)$ and $B=(V_B, E_B, \ell_B)$ such that the colored graphs $A'=(V_A, E_A, \ell_A')$ and $B'=(V_B, E_B, \ell_B')$ with $\ell_A'(v) := \text{RELABEL}(\ell_A, f(V_A, E_A)(v))$ and $\ell_B'(v) := \text{RELABEL}(\ell_B, f(V_B, E_B)(v))$ cannot be distinguished by the 1-WL algorithm.
\end{theorem}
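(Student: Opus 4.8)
The goal is to exhibit two colored graphs $A,B$ whose orbit-relabelings $A',B'$ are indistinguishable by $1$-WL. The figures in the excerpt already point to the construction: take the two $12$-node circulant graphs $C_{12}(1,2)$ shown in (c) and (d), which are isomorphic as uncolored graphs, and color them \emph{differently}: in one, color a contiguous arc of $6$ vertices red and the rest black; in the other, color alternating vertices red and black. I would set these as $A$ and $B$ (with $\ell_A,\ell_B$ the respective red/black colorings). First I would observe that the underlying uncolored graph $C_{12}(1,2)$ is vertex-transitive, so its orbit partition is trivial: $f(V_A,E_A)(v)$ is the same value for all $v$, and likewise $f(V_B,E_B)(w)$ is constant across $w$. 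Hence $\mathrm{RELABEL}(\ell_A, f(V_A,E_A)(v))$ depends only on $\ell_A(v)$ (up to a fixed injective renaming), so $A'$ is just $A$ re-encoded, and similarly $B'=B$ re-encoded; the orbit information adds nothing.

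**Key steps.** (1) Verify vertex-transitivity of $C_{12}(1,2)$ — the rotation $i \mapsto i+1 \bmod 12$ is a graph automorphism acting transitively, so there is a single orbit; this makes $f$'s output constant on each graph and reduces $A',B'$ to $A,B$ with colors re-encoded by a common injective $\mathrm{RELABEL}$. (2) Confirm $A$ and $B$ have the same color-class sizes ($6$ red, $6$ black each), which is the $0$-th round condition for $1$-WL not to separate them. (3) Run $1$-WL on $A$ and $B$ and show the coloring is already stable after round $0$: in $C_{12}(1,2)$ every vertex has degree $4$ (neighbors at offsets $\pm1,\pm2$); I need the multiset of neighbor-colors to be identical for all same-colored vertices within each graph \emph{and} matched across the two graphs. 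For the alternating coloring $B$, each red vertex sees neighbors at offsets $\pm1$ (opposite color) and $\pm2$ (same color): multiset $\{\text{red},\text{red},\text{black},\text{black}\}$; by symmetry every black vertex sees the same multiset. For the arc coloring $A$, this is \emph{not} uniform — boundary vertices of the arc see a different multiset than interior ones — so the naive arc coloring fails. (4) Therefore I would replace the arc coloring: I need a $6$-red/$6$-black coloring of $C_{12}(1,2)$ in which \emph{every} red vertex and \emph{every} black vertex has neighbor-color multiset $\{\text{red},\text{red},\text{black},\text{black}\}$, i.e. the coloring is $1$-WL-stable and equivalent to the alternating one's stable coloring. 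A natural candidate: color vertices $\{0,1,6,7\}$ red together with... — this needs care, so this is where real checking happens.

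**Main obstacle.** The crux is step (4): finding a \emph{second} $2$-coloring of $C_{12}(1,2)$, genuinely non-isomorphic to the alternating one as a colored graph, yet with the identical stable $1$-WL coloring. One clean route is to use the two graphs $A$ and $B$ as a colored-graph analogue of a classical $1$-WL-indistinguishable pair: pick $A = C_{12}(1,2)$ with the alternating coloring and $B = $ the disjoint union of two copies of $C_6(1,2)$... no — instead I would lean on the uncolored pair (a)/(b) mentioned in the caption, $C_6$ versus $2\cdot C_3$, but those are $2$-regular and $1$-WL \emph{does} distinguish them unless colored cleverly. The honest difficulty is that $1$-WL-indistinguishable \emph{colored} pairs require the color classes themselves to ``fool'' the refinement; the safest construction is to start from a known $1$-WL-equivalent pair of \emph{uncolored} graphs $G_1,G_2$ that are both vertex-transitive (so orbit-relabeling is vacuous), then let $A,B$ be $G_1,G_2$ with the trivial (all-same) coloring. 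Then $A'=A$, $B'=B$, and $1$-WL fails on them by assumption. So the real work is exhibiting a pair of non-isomorphic, vertex-transitive, $1$-WL-equivalent graphs — e.g. two cospectral circulants, or the standard $4\times 4$ rook's graph versus the Shrikhane graph (both vertex-transitive, strongly regular with the same parameters, hence $1$-WL-equivalent). I would present that pair, note vertex-transitivity kills the orbit term, and conclude. The one subtlety to double-check is that the theorem wants \emph{colored} graphs $A,B$ as input and compares $A',B'$; with the all-one coloring this is immediate, so the main obstacle is purely producing the vertex-transitive $1$-WL-equivalent pair and citing its standard properties.
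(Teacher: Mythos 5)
Your central idea---work over a vertex-transitive base graph so that the orbit condition forces $f$ to be constant and the relabeling to become vacuous---is exactly the paper's, and your observation that a contiguous arc of six red vertices fails (boundary vertices see a different neighbour-colour multiset) is correct. But you stop short of the construction that actually closes the argument: the second colouring of $C_{12}(1,2)$ you are looking for is two antipodal arcs of three, i.e.\ red on $\{0,1,2\}\cup\{6,7,8\}$. A direct check shows that under this colouring, just as under the alternating one, \emph{every} vertex has exactly two red and two black neighbours, so both colourings are already $1$-WL-stable with matching colour histograms and $1$-WL cannot separate them; yet the two coloured graphs are genuinely non-isomorphic, since the red class induces two triangles in one and a $6$-cycle in the other. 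This is precisely the paper's pair.

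The fallback you settle on instead---two non-isomorphic, vertex-transitive, $1$-WL-equivalent graphs such as the $4\times 4$ rook's graph and the Shrikhande graph, each with the trivial all-same colouring---has a genuine gap. The hypothesis on $f$ only constrains it \emph{within} a single graph; nothing forces $f$ to assign the same constant to two different (indeed non-isomorphic) underlying graphs. For example, $f(G)(v)=$ ``number of $4$-cliques containing $v$'' respects orbits, equals $2$ on every vertex of the rook's graph and $0$ on every vertex of the Shrikhande graph, so $A'$ and $B'$ receive two \emph{different} uniform colours and $1$-WL separates them already at iteration $0$. To make the vertex-transitivity trick work you must take the \emph{same} uncoloured graph $(V,E)$ for both $A$ and $B$, differing only in $\ell_A$ versus $\ell_B$; then $f(V_A,E_A)=f(V_B,E_B)$ literally, the relabelling acts identically on both colour classes, and indistinguishability is preserved. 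That is why the paper uses one copy of $C_{12}(1,2)$ with two colourings rather than two different graphs with one colouring.
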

\begin{proof}
The graphs $A$ and $B$ are the graphs in \cref{fig:graphc} and \cref{fig:graphd}. Note that without the colors, both graphs are isomorphic, and all nodes are in one orbit, which is verified by the isomorphism in cycle notation $(1,2,3,4,5,6,7,8,9,10,11,12).$ This means that $\ell_A(v)=\ell_A(w)$ if and only if $\ell_A'(v)=\ell_A'(w)$ and $\ell_B(v)=\ell_B(w)$ if and only if $\ell_B'(v)=\ell_B'(w).$ Which implies that regardless of the specific function $f$, the partition of colors from $\ell_A$ and $\ell_B$ to $\ell_A'$ and $\ell_B'$ does not change and thus \cref{fig:graphc} and \cref{fig:graphd} also represent graphs $A'$ and $B'$. Lastly, note that all nodes in \cref{fig:graphc} and \cref{fig:graphd} have two black and two red neighbors, which implies that these colors already represent the stable color partition $C_\infty^1$ and thus 1-WL cannot distinguish these graphs.
\end{proof}
Note that, \cref{thm:PSEs} applies to all PSEs that do not involve a random variable as output.
Further, this means that if GPSE (assuming no randomness in its predictions) or other PSEs are used to process graphs that are more than 1-WL hard to distinguish, then the downstream GNN should be chosen based on expressivity, for instance, by also adding RNF into its input.

\textbf{\GPSEm.}
While the \GPSEp variant considers strengthening the pre-training stage with hard graphs, our experimental results in \cref{sec:experiments} on real-world datasets do not read improved performance. Therefore, to address the question of the importance of expressivity in GPSE, we also replace the input RNF with a constant feature vector, effectively removing the universality offered by RNF. We call this variant \GPSEm. It has the same model architecture as GPSE; however, instead of random normally distributed inputs, it uses $\mathbf 1$ vectors, i.e., $\text{\GPSEm}(G)(w):=\text{GPSE}(G, 1)(w)$. In practice, this implies that the \GPSEm model is not capable of computing all PSEs for any input graph. As an example, \GPSEm cannot compute cycle counts for the graphs in \cref{fig:grapha} and \cref{fig:graphb}, since \GPSEm cannot tell these two graphs apart. However, since molecules are almost trees \citep{ahn2022spanning}, \GPSEm should be able to still learn the target PSEs for the pre-training dataset, MolPCBA, as well as for the other evaluation datasets like ZINC-12k. Notably, \cref{tab:WL} shows that \GPSEm does not excel on datasets that involve graphs harder than 1-WL. In particular, the performance of \GPSEm is roughly in line with a constant model. On TRIX \GPSEm outperforms GPSE as well as \GPSEp, because these other variants are having trouble generalizing well to the larger 3-regular graphs of TRIX, and actually perform worse than a naive, constant predictor. The evaluations in \cref{sec:experiments} will demonstrate that, indeed, RNF is not necessary for good practical performance either. In practice, at least for current datasets, GPSE, \GPSEp, and \GPSEm perform comparably.

\textbf{Practical implications.} As will be demonstrated in the next section, the differences between GPSE, \GPSEp, and \GPSEm in practice are virtually non-existent. This is likely due to the fact that expressivity is rarely relevant in practice. However, if expressivity is determined to be irrelevant \GPSEm should probably be preferred, as the use of RNF in GPSE could lead to unpredictable results. While on the other hand, if expressivity is important, then an approach closer to \GPSEp should be taken, specifically focusing on pretraining using hard graphs that are relevant to the task or generally hard graphs.

\section{Experimental Results}
\label{sec:experiments}
We evaluate PSEs as well as the proposed GPSE variants from \cref{sec:expresiveness} across three critical aspects of performance. First, we analyze the speed at which GPSE-augmented GNNs converge to the performance of other PSE-augmented GNNs. Specifically, we calculate the number of epochs required by GPSE to reach the performance attained by other baseline GNNs. Second, we explore the transferability of PSEs in data-scarce environments. To this end, we plot the amount of data (i.e., sample size) vs. performance, investigating how well PSE-augmented models perform with varying amounts of training data. Finally, we investigate why GPSE improves performance over other PSEs for most but not all datasets. We focus on ZINC-12k, the perhaps clearest example of good GPSE performance, and consider generalization by inspecting the gap between training and test metrics. 

Overall, our experiments include three dataset types: (i) synthetic datasets EXP, CEXP, CSL, TRI, and TRIX \citep{abboud2020surprising, murphy2019relational, sato2021random}, designated to study the expressivity of GPSE, as reported in Table \ref{tab:WL}. (ii) ZINC-12k, where we find GPSE to perform well across all considered aspects, and (iii) MolNet, where we find that GPSE does not perform better than baseline methods in general. The evaluations on these diverse datasets allow us to provide well-grounded insights regarding the utilization of GPSE as a component in future GFMs. We provide details on the learning and evaluation procedures in Appendix \ref{app:training_details}.
We elaborate on the considered PSEs in \cref{app:PSE}.
We elaborate on the datasets and the motivation for using them in Appendix \ref{app:datasets}.  

\subsection{GPSEs Convergence to PSEs Performance}
\label{sec:convergence}

\begin{table}
\centering
\caption{Using GPSE on ZINC-12k significantly reduces the number of epochs needed to reach the same test performance as other PSEs. This table shows the factor of speedup (in terms of epochs) of GPSE to reach the same MAE on ZINC-12k on various downstream GNNs and PSEs. For instance, $2$ means GPSE reaches the same performance with only half of the epochs. This table comparing \GPSEm and \GPSEp to other PSEs can be found in \cref{app:convergence}.}
\begin{tabular*}{\linewidth}{@{\extracolsep{\fill}}lccccc}
\toprule
PSE$\downarrow$ \ \textbackslash \ Downstream$\rightarrow$ &GCN&GatedGCN&GIN&GPS\\\midrule 
none&\phantom{0}71.2$\pm$10.9&\phantom{0}65.6$\pm$12.2&\phantom{0}61.0$\pm$10.2&12.4$\pm$1.4\phantom{0}\\
rand&905.0$\pm$0.0\phantom{0}&110.0$\pm$0.0\phantom{0}&253.0$\pm$0.0\phantom{0}&98.5$\pm$14.1\\
LapPE&\phantom{0}43.8$\pm$4.4\phantom{0}&\phantom{0}45.5$\pm$6.5\phantom{0}&\phantom{0}31.9$\pm$4.7\phantom{0}&14.7$\pm$1.7\phantom{0}\\
RWSE&\phantom{0}20.5$\pm$2.4\phantom{0}&\phantom{0}25.1$\pm$3.1\phantom{0}&\phantom{0}21.5$\pm$3.4\phantom{0}&\phantom{0}2.9$\pm$0.5\phantom{0}\\
AllPSE&\phantom{0}13.5$\pm$2.3\phantom{0}&\phantom{0}20.1$\pm$3.4\phantom{0}&\phantom{0}18.7$\pm$2.3\phantom{0}&\phantom{0}2.6$\pm$0.2\phantom{0}\\
\bottomrule
\end{tabular*}
\label{table:training_convergence}
\end{table}

The efficiency of neural network optimization is critical, particularly in minimizing the time required for training models. Reducing the training time can result in significant resource savings. 

To evaluate the speedup GPSE enables, we compare how many epochs a GPSE-augmented GNN takes to reach the same performance as other PSE-augmented GNNs, as shown in \cref{table:training_convergence}. Our results show that using GPSE yields good performance significantly faster compared with all other PSEs. The smallest speedup is a factor of 2, and in almost all cases, GPSE obtains a speedup factor of 10 or more. Reaching good performance faster not only saves training time during parameter tuning but also reduces the environmental impact by lowering CO2 emissions associated with prolonged computational efforts. We note here, that \cref{table:training_convergence} does not imply that GPSE converges to its optimal performance faster, it only implies that GPSE attains good performances faster than other PSEs. In fact, according to our evaluations, GPSE converges at the same speed as other PSEs (see \cref{app:convergence}).

\subsection{Influence of Sample Size}
\label{sec:samplesize}

\begin{figure}[t]
\centering
\begin{subfigure}{0.49\textwidth}
\includegraphics[height=40ex]{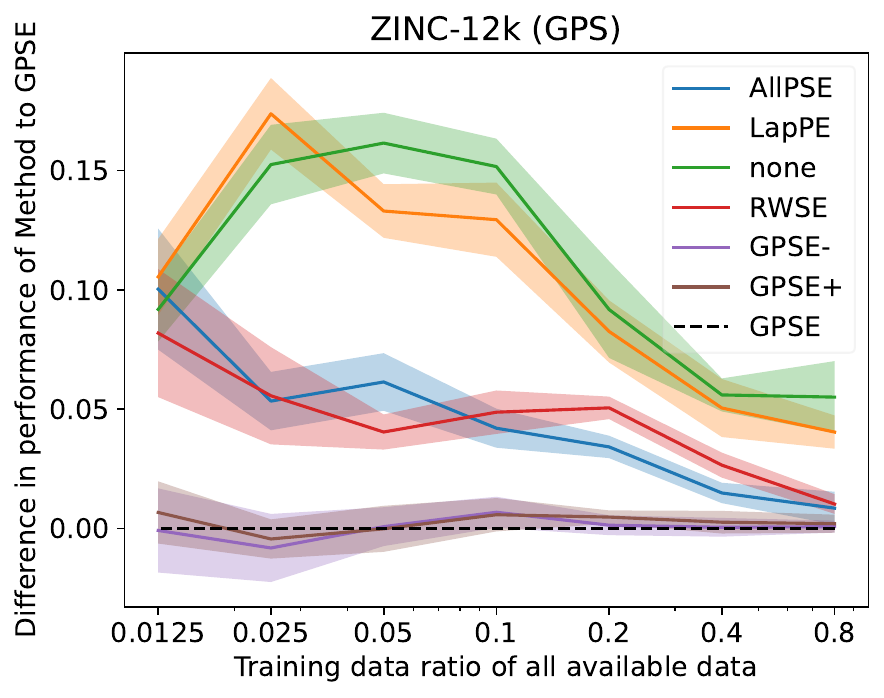}
\caption{}
\end{subfigure}
\begin{subfigure}{0.49\textwidth}
\includegraphics[height=40ex]{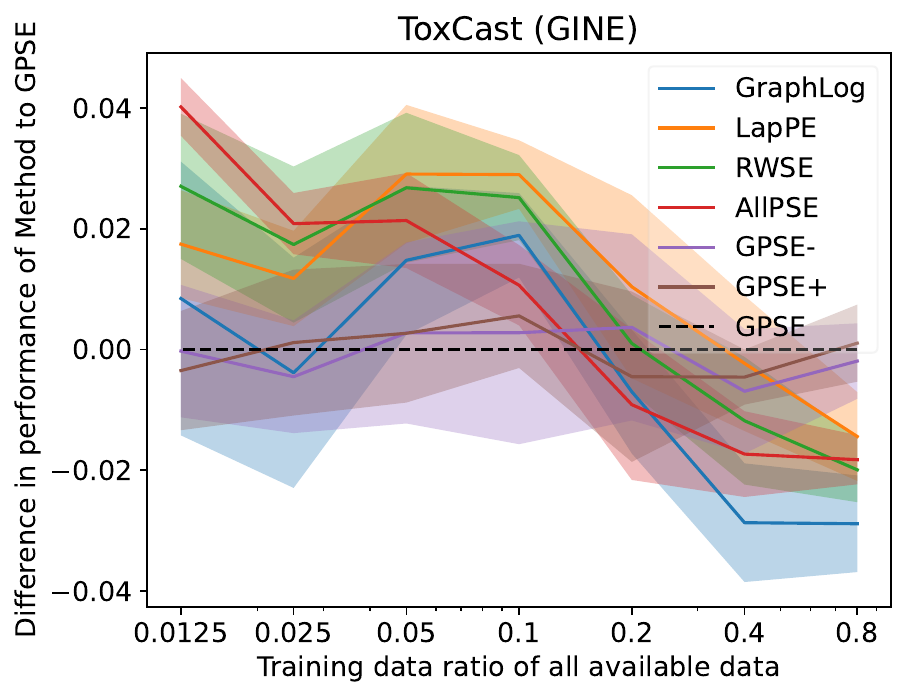}
\caption{}
\end{subfigure}
\caption{GPSE variants outperform all other PSEs regardless of the amount of available training data on ZINC-12k. Notably, for less available training data, the advantage of GPSE and its variants is more strongly pronounced. However, for ToxCast the opposite is true. This figure shows downstream training with fractions of the ZINC-12k and ToxCast datasets. We show the difference in performance (MAE $\downarrow$ for ZINC-12K and AUROC $\uparrow$ for ToxCast) obtained by various PSEs, including \GPSEm and \GPSEp. We show results on additional datasets in  \cref{figure:additionaldatasets_fewshot}.}
\label{fig:zinc_fewshot}
\end{figure}

A common challenge in machine learning is the scarcity of available training data, which can significantly hinder model performance. It is common in areas like NLP and CV to finetune pre-trained models in data-scarce applications. Some prominent examples are LLMs like GPT-3, which can be fine-tuned or used in zero-shot settings for various NLP tasks \citep{brown2020language}. In CV, models like CLIP, which learn visual concepts from natural language supervision, have been effectively adapted to new tasks with minimal data \citep{radford2021learning}. These approaches leverage powerful representations learned from large-scale data, making them highly effective in transfer learning scenarios.

To investigate how pre-trained GPSE models can mitigate the impact of limited data, we conducted a few-shot learning-inspired experiment. In this experiment, we progressively reduce the amount of training data for the downstream task, and evaluate the resulting performance of a downstream GNN when combined with different PSEs against GPSE and its variants. As proposed in an ablation study in \citet{liu2023graph}, we start with all training data ($80\%$ training, $10\%$ validation, $10\%$ testing) and iteratively halve the training data to just $1.25\%$, while leaving the validation and test data unchanged.

\cref{fig:zinc_fewshot} shows the results of the experiment on ZINC-12k and MolNet ToxCast datasets. GPSE's performance is used as a baseline (black dotted line) and compared with other PSEs, including LapPE, RWSE, and the \GPSEm and \GPSEp variants. Both \GPSEm and \GPSEp perform similarly to GPSE. On ZINC-12k, GPSE-like models outperform all other PSEs, with the gap widening as data decreases logarithmically. However, this advantage is not consistent across all datasets. For instance, on ToxCast, GPSE-like models outperform other PSEs with full training data but are surpassed by other PSEs in data-scarce scenarios. Additional results are provided in \cref{figure:additionaldatasets_fewshot} (\cref{appendix:fewshot}), highlighting the significant benefits of learned PSEs in future GFMs.

\subsection{Generalization of PSEs}
\label{sec:generalization}

\begin{table}[t]
\centering
\caption{GPSE variants offer the largest (best) generalization error. This table shows the generalization error ($\text{train}-\text{test}$ MAE) on ZINC-12k for various PSEs and downstream MPNN choices. The largest (best) values are {\bftab{bold}}, and insignificantly smaller values are \utab{underlined}.}
\begin{tabular*}{\linewidth}{@{\extracolsep{\fill}}lccccc}
\toprule
PSE $\downarrow$ \ \textbackslash \ Downstream$\rightarrow$ &GCN&GatedGCN&GIN&GPS\\\midrule
none&-0.143$\pm$0.010&-0.137$\pm$0.013&-0.131$\pm$0.011&-0.085$\pm$0.009\\
rand&-0.136$\pm$0.045&-0.234$\pm$0.041&-0.152$\pm$0.021&-0.158$\pm$0.050\\
LapPE&-0.112$\pm$0.008&-0.118$\pm$0.009&-0.110$\pm$0.007&-0.065$\pm$0.032\\
RWSE&-0.091$\pm$0.007&-0.088$\pm$0.010&-0.073$\pm$0.013&-0.048$\pm$0.006\\
AllPSE&-0.075$\pm$0.006&-0.078$\pm$0.009&-0.060$\pm$0.010&-0.050$\pm$0.007\\
GPSE&\bftab{-0.032$\pm$0.005}&\utab{-0.045$\pm$0.004}&\utab{-0.034$\pm$0.004}&\bftab{-0.037$\pm$0.007}\\
\GPSEm&\utab{-0.033$\pm$0.005}&\utab{-0.042$\pm$0.005}&\bftab{-0.034$\pm$0.005}&\utab{-0.042$\pm$0.005}\\
\GPSEp&\utab{-0.034$\pm$0.006}&\bftab{-0.040$\pm$0.005}&\utab{-0.035$\pm$0.004}&\utab{-0.041$\pm$0.005}\\
\bottomrule
\end{tabular*}
\label{table:generalizationerror_zinc}
\end{table}

\begin{table}[t]
\centering
\caption{GPSE does not always offer the lowest (best) generalization error. This table shows generalization error ($\text{train}-\text{test}$ AUROC) on MolNet datasets. Smallest (best) values are {\bftab{bold}}, and insignificantly larger values are \utab{underlined}.}
\resizebox{\textwidth}{!}{
\begin{tabular}{lcccccccc}
\toprule
PSE $\downarrow$ \ \textbackslash \ Dataset$\rightarrow$ &BACE&BBBP&ClinTox&HIV&MUV&SIDER&Tox21&ToxCast\\\midrule
GraphLog&17.3$\pm$1.6&32.0$\pm$1.2&\utab{21.7$\pm$3.6}&\utab{16.3$\pm$5.7}&15.9$\pm$3.7&\utab{12.2$\pm$3.8}&20.5$\pm$3.2&\bftab{17.1$\pm$2.7}\\
LapPE&\bftab{\phantom{0}6.1$\pm$5.0}&27.6$\pm$2.1&\utab{18.0$\pm$4.5}&\utab{13.1$\pm$6.3}&14.6$\pm$3.1&\utab{11.8$\pm$1.5}&14.1$\pm$1.2&\utab{17.4$\pm$1.4}\\
RWSE&11.3$\pm$2.2&28.4$\pm$2.4&\utab{17.3$\pm$2.4}&\utab{12.9$\pm$1.8}&\utab{10.4$\pm$1.6}&12.7$\pm$2.5&18.4$\pm$2.2&\utab{18.5$\pm$1.8}\\
AllPSE&13.3$\pm$2.7&\bftab{23.3$\pm$3.4}&\utab{21.3$\pm$2.9}&19.8$\pm$4.8&\bftab{10.0$\pm$1.7}&13.2$\pm$0.9&\bftab{12.4$\pm$1.6}&\utab{19.3$\pm$1.9}\\
GPSE&10.6$\pm$2.4&\utab{25.6$\pm$4.3}&\bftab{16.8$\pm$7.4}&\utab{17.2$\pm$4.8}&17.6$\pm$0.7&\utab{12.4$\pm$3.0}&\utab{13.5$\pm$1.8}&\utab{19.1$\pm$1.3}\\
\GPSEm&\utab{\phantom{0}9.9$\pm$3.6}&\utab{26.3$\pm$3.8}&\utab{17.4$\pm$5.1}&\utab{13.8$\pm$6.7}&17.9$\pm$2.3&\utab{11.5$\pm$2.7}&\utab{12.7$\pm$2.1}&\utab{18.4$\pm$2.2}\\
\GPSEp&12.3$\pm$4.0&\utab{26.5$\pm$3.2}&\utab{18.2$\pm$4.9}&\bftab{12.8$\pm$4.7}&16.9$\pm$2.0&\bftab{\phantom{0}9.3$\pm$3.8}&14.4$\pm$1.9&19.6$\pm$0.7\\
\bottomrule
\end{tabular}}
\label{table:generalizationerror_molnet}
\end{table}

We now investigate why GPSE might or might not outperform other PSEs when used to augment downstream GNN models. We focus on two possible proxies for good performance: (i) better training loss and (ii) better generalization error. Notably, if one is kept constant, then the respective other will govern changes in test performance. In this set of experiments, we define the generalization error as the difference between train performance and test performance, where the performance metric depends on the dataset and task of interest. For example, in the case of regression tasks such as on ZINC-12k, a larger value---based on MAE---means better generalization, while for classification tasks, a lower value---based on AUROC--indicates better generalization. We provide the obtained generalization errors for ZINC-12k in \cref{table:generalizationerror_zinc}, and the respective train (and test) mean-absolute-error (MAE) in \cref{table:train_zinc} (and \cref{table:test_zinc} in \cref{appendix:generalization}). On ZINC-12k the GPSE variants attain the best test performance regardless of the backbone (\cref{table:test_zinc} in Appendix). \cref{table:generalizationerror_zinc} shows the GPSE variants attain the best generalization errors compared with other PSEs. In contrast, \cref{table:train_zinc} shows that while GPSE attains fairly low training MAE, other PSEs ``outperform'' the GPSE variants in terms of training performance. These observations imply that the performance gains of GPSE-augmented models are primarily due to better generalization, and not, for instance, due to a model that better fits the training set. However, this better generalization does not always hold. In contrast to previous observations, \cref{table:generalizationerror_molnet} showing the generalization error for the MolNet datasets, demonstrates that GPSE and its variants can be outperformed considering generalization on some downstream tasks (MUV and BACE are notable examples). Better generalization and generally better performance are desired properties of a foundation model.
These results indicate that while GPSE shows some superior generalization capabilities, they need to be examined per case.
While this work studied expressivity in \cref{sec:expresiveness} and generalization here, we are not attempting to conflate the two herein. The observations on expressivity and generalization are entirely separate. Notably, even though \GPSEm, GPSE, and \GPSEp have different practical expressivity, their generalization seems comparable, which agrees with previous work on the complexity of expressivity and generalization \citep{franks2024weisfeilerleman}.
 
\begin{table}[t]
\centering
\caption{GPSE variants do not offer the smallest (best) training error. This table shows the MAE on the trainset, which was also used as loss, on ZINC-12k for various PSEs and downstream MPNN choices. Smallest (best) values are {\bftab{bold}}, and insignificantly larger values are \utab{underlined}.}
\begin{tabular*}{\linewidth}{@{\extracolsep{\fill}}lccccc}
\toprule
PSE $\downarrow$ \ \textbackslash \ Downstream$\rightarrow$   &GCN&GatedGCN&GIN&GPS\\\midrule
none&0.146$\pm$0.012&0.107$\pm$0.017&0.148$\pm$0.014&0.038$\pm$0.012\\
rand&1.119$\pm$0.067&0.988$\pm$0.049&1.090$\pm$0.021&0.711$\pm$0.054\\
LapPE&0.106$\pm$0.010&\utab{0.071$\pm$0.009}&0.107$\pm$0.009&0.076$\pm$0.107\\
RWSE&0.086$\pm$0.008&0.078$\pm$0.011&0.102$\pm$0.011&\utab{0.027$\pm$0.008}\\
AllPSE&\bftab{0.077$\pm$0.006}&\bftab{0.064$\pm$0.009}&\bftab{0.089$\pm$0.010}&\bftab{0.021$\pm$0.006}\\
GPSE&0.093$\pm$0.005&\utab{0.067$\pm$0.005}&\utab{0.093$\pm$0.004}&0.033$\pm$0.010\\
\GPSEm&0.096$\pm$0.006&\utab{0.071$\pm$0.007}&\utab{0.096$\pm$0.003}&\utab{0.024$\pm$0.006}\\
\GPSEp&0.093$\pm$0.007&0.073$\pm$0.004&\utab{0.092$\pm$0.004}&\utab{0.025$\pm$0.006}\\
\bottomrule
\end{tabular*}
\label{table:train_zinc}
\end{table}

\section{Discussion and Conclusion} 

This paper explored PSEs on graphs, particularly the recently proposed GPSE. We found that the primary factor influencing PSE performance is their generalization ability, highlighting the need for a deeper understanding of strong generalization in GNNs—a critical property for GFMs. Although GPSE offers a promising approach, it does not consistently generalize well across tasks and graph structures, underscoring the need for models with improved generalization.

Future research should enhance generalization in GNNs, especially for graph SEs and PEs. Initial studies on the generalization of PSEs and embeddings for substructures, such as in \citet{franks2024weisfeilerleman}, suggest that subgraph-enhanced WL kernels' generalization depends on the counted substructures. This may apply to MPNNs as well; for instance, PSEs used in GPSE pre-training generalize well on ZINC but poorly on some MolNet datasets. The broader impact of graph structure on generalization has been identified as a key challenge in ML on graphs (Challenge 3.2 in \citet{morris2024position}), and addressing these challenges will be crucial for advancing GFMs.

\section*{Acknowledgements}
Part of this work was conducted within the DFG Research Unit FOR 5359 on Deep Learning on Sparse Chemical Process Data (BU 4042/2-1, KL 2698/6-1, and KL 2698/7-1) and the DFG TRR 375 (no. 511263698). SF and MK acknowledge support by the Carl-Zeiss Foundation (AI-Care, Process Engineering 4.0), the BMBF award 01|S2407A, and the DFG awards FE 2282/6-1, FE 2282/1-2, KL 2698/5-2, and KL 2698/11-1. ME is funded by the Blavatnik-Cambridge fellowship, the Cambridge Accelerate Programme for Scientific Discovery and the Maths4DL EPSRC Programme. Additional support is by Bourse en intelligence artificielle des Études supérieures et postdoctorales (ESP) 2023-2024 [SC]; Canada CIFAR AI Chair, IVADO (Institut de valorisation des données) grant PRF-2019-3583139727, FRQNT (Fonds de recherche du Québec - Nature et technologies) grant 299376, NSERC Discovery grant 03267 and NSF DMS grant 2327211 [GW].

\bibliographystyle{unsrtnat}
\bibliography{bibliography}
\newpage
\appendix

\section{Expressivity}
\label{app:expressivity}

For the most part, in this work, we only consider expressivity attained from RNF because it was proposed as the method of attaining expressivity in \citet{liu2023graph}. Here we now discuss the reason for this choice as well as some related work on the topic of expressivity in machine learning on graphs. Related work on expressivity has focused on two main directions for attaining expressivity for MPNNs.

The first direction, which we discussed in detail in this paper, is essentially the use of PSEs. Notably PSEs like LapPE \citep{kreuzer2021rethinking, rampavsek2022recipe} and RWSE \citep{rampavsek2022recipe, dwivedi2020generalization, dwivedi2021graph} increase expressivity when used to augment downstream MPNNs but have not been shown to be universal and in fact we show in this work that they cannot be universal. Further PSEs that are well-known to be universal are based on subgraph counting \citep{shiv2019novel, bouritsas2022improving, zhao2021stars, chen2022structure} or homomorphism counting \citep{pmlr-v235-jin24a, bouritsas2022improving, zhang2024beyond}. However, related work usually considers the setting of uncolored graphs, in which case these PSE are indeed universal if sufficiently many subgraphs/homomorphisms are being counted. Once more, according to \cref{thm:PSEs} subgraph and homomorphism counting that disregards node colors will not grant universality to augmented downstream MPNNs considering colored graphs. The emphasis here is on PSEs that disregard node colors, one could propose subgraph and homomorphism counting of colored subgraphs which would be fully universal. Lastly, RNF which can be considered a PSE has been shown to grant universality when used to augment downstream MPNNs \citep{abboud2020surprising, sato2021random, franks2023systematic}. As RNF relies on randomness to differentiate nodes during computation, RNF is universal regardless of node colors. The universality or expressivity of PSEs is inherently linked to the individualization refinement algorithm for which PSEs act as individualization functions as discussed in \citet{franks2023systematic}, which is inherently efficient assuming that any precomputations of the PSEs can be done efficiently.

The second direction modifies an MPNN's computation to attain more expressivity. Common examples include the k-GNN \citep{morris2019weisfeiler, NEURIPS2020_f81dee42, wang2022towards} and PPGN \citep{maron2019provably, pmlr-v97-maron19a}. These higher-order methods are essentially related to the $k$-dimensional Weisfeiler-Leman algorithm and, as a result, suffer from high computational and memory costs. However, they notably show advantages in performance in some cases.

GPSE \citep{liu2023graph} used RNF instead of other methods of expressivity. Notably, of the methods mentioned, only RNF, subgraph, and homomorphism counting, and higher-order methods are truly universal. Subgraph and homomorphism counting, however, require a potentially very large set of graphs to be counted (which is even bigger if not infinitely large for colored graphs), and higher-order methods might require a very large order (in the case of k-GNNs, a very large $k$) for which the computational and memory cost can be prohibitively large. In this sense, then, RNF is the only simple, effective, and scalable method of granting GPSE universality, which makes it the best choice for GPSE.

Lastly, in this work we focus mostly on expressivity as it pertains to the 1-WL, however, some more finegrained approaches to expressivity have been recently proposed. For example, expressivity based on biconnectivity \citep{zhang2023rethinking} or expressivity based on homomorphism counting \citep{zhang2024beyond}.

\section{Proof of Theorem 1}
\label{appendix:proofs}
\setcounter{theorem}{0}
\begin{theorem}
Under mild assumptions on the input graphs, an MPNN consisting of sufficiently many GatedGCN layers can approximate an MPNN made up of GIN layers arbitrarily well.
\end{theorem}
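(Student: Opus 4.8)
The plan is to exploit the fact that the only substantive differences between a GIN layer and a GatedGCN layer are: (i) GatedGCN multiplies each neighbour message by a sigmoid gate; (ii) GatedGCN applies one extra outer $\text{ReLU}$; and (iii) GIN postcomposes the aggregation with a full $\text{MLP}^{(l+1)}$, whereas a single GatedGCN layer postcomposes it with only one $\text{ReLU}$. I would neutralize (i), route around (ii), and simulate (iii) by stacking layers, so that one \emph{block} of several GatedGCN layers reproduces one GIN layer.

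First I would realize the GIN aggregation up to a harmless $\text{ReLU}$. Setting $W_2^{(l)}=W_3^{(l)}=0$ forces the gate to the constant vector $\sigma(\mathbf 0)=\tfrac12\mathbf 1$ for every neighbour, so the neighbourhood term collapses to the linear map $\tfrac12\sum_{j\in N(i)}\vec h_j W_4^{(l)}$; taking $W_4^{(l)}=2\vec I$ and $W_1^{(l)}=\vec I$ gives pre-activation $\vec h_i+\sum_{j\in N(i)}\vec h_j$, which is the GIN aggregation without the self-scaling $(1+\epsilon^{(l+1)})$. To make the outer $\text{ReLU}$ transparent wherever I only shuffle data, I would carry the hidden state in a redundant non-negative ``doubled'' encoding $\bigl(\,[\vec h]_+,\,[-\vec h]_+\,\bigr)$ plus a separate block holding the neighbour sum (also stored doubled, since $\sum_j\vec h_j=\sum_j[\vec h_j]_+-\sum_j[-\vec h_j]_+$); one preparatory layer with $W_1=[\vec I\mid-\vec I]$, $W_4=0$ produces the doubling. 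Because $\vec h=[\vec h]_+-[-\vec h]_+$, every linear operation needed downstream is a linear map of this encoding whose argument is non-negative, so the outer $\text{ReLU}$ acts as the identity exactly there. The factor $(1+\epsilon^{(l+1)})$ and the first linear layer of $\text{MLP}^{(l+1)}$ are then folded into the $W_1$ of the next block's first layer, where the $\text{ReLU}$ \emph{is} supposed to fire; the biases of $\text{MLP}^{(l+1)}$ are supplied by the bias terms of the GatedGCN layer (or by a maintained constant coordinate).

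Second, I would simulate the update MLP. A GatedGCN layer with $W_2=W_3=W_4=0$ reduces to the node-wise map $\vec h_i\mapsto\text{ReLU}(\vec h_i W_1)$, i.e.\ exactly one fully connected $\text{ReLU}$ layer; stacking such layers yields an arbitrary node-wise $\text{ReLU}$ network. Since $\text{ReLU}$ networks are universal approximators on compact sets, I would approximate $\text{MLP}^{(l+1)}$ to within any target accuracy on the compact set of pre-activations that can actually occur. This is precisely where the mild assumptions on the input graphs enter: bounded degree and bounded initial features confine every reachable hidden state and pre-activation to a fixed compact set, so finitely many GatedGCN layers of sufficient width suffice, and composing the two gadgets yields one GIN layer up to an arbitrarily small error.

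Finally, I would push the per-layer error through the whole network: each fixed-weight GIN layer is Lipschitz on the compact domain, so by a standard telescoping argument, choosing the per-layer error small enough bounds the accumulated error of a finite stack by any prescribed $\epsilon$. The main obstacle — and the part deserving the most care — is the bookkeeping that keeps the outer $\text{ReLU}$ harmless while still letting the genuine $\text{ReLU}$s of the simulated MLP do their work: one must maintain the non-negative doubled encoding as an invariant across \emph{all} layers, including through the aggregation step, and verify that the possibly-negative self-loop weight $(1+\epsilon^{(l+1)})$ is only ever applied inside a linear map immediately followed by a $\text{ReLU}$ that belongs there. Neutralizing the gate, the routing, and the universal-approximation step are otherwise routine.
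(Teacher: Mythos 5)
Your proposal is correct and follows essentially the same route as the paper's proof: one GatedGCN layer reproduces the GIN aggregation with the gate forced to a constant, the outer ReLU is made transparent by carrying the state in the non-negative doubled encoding $\bigl([\vec h]_+,[-\vec h]_+\bigr)$ together with a maintained constant coordinate, and the update MLP is simulated by stacking gateless node-wise GatedGCN layers. The only notable deviations are that you neutralize the gate exactly via $\sigma(\mathbf 0)=\tfrac12\mathbf 1$ compensated by $W_4=2\vec I$ (the paper instead drives the gate to $1-\alpha$ using the constant feature, making $\alpha$ the error knob), and that you invoke universal approximation for a general update MLP with a Lipschitz/telescoping error analysis, where the paper assumes a two-layer ReLU MLP and simulates it exactly up to the multiplicative $(1-\alpha)$ factor.
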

\begin{proof}
A GIN layer is defined as follows:
$$\vec h_u^{(l+1)} = \text{MLP}\left((1+\epsilon)\vec h_u^{(l)}+\sum_{\{u,v\}\in E(G)} \vec h_v^{(l)}\right).$$
For simplicity we will assume the MLP to be made up of two linear transformations with ReLU activation. However, the following proof can also be modified for other MLP choices.

For the GatedGCN layers, we will assume that the input graphs node features contain one feature that is constant for all nodes, i.e., if the input graph is $G=(V, E)$ with node features $X: V\to \mathbb R^d$, without loss of generally, $\forall v\in V: X(v)_0 = 1.$ We will ensure that this is also true for the output after multiple GatedGCN layers. This means that we will be working with the invariance that $\forall v, l': (\vec h_v^{(l')})_0=1.$ Now, note that one GatedGCN layer is defined as $$\vec h_u^{(l+1)} = \text{ReLU}(U^{(l)}\vec h_u^{(l)}+\sum_{\{u,v\}\in E(G)} \eta_{uv}^{(l)}\odot V^{(l)}\vec h_v^{(l)})$$
with $$\eta_{uv}^{(l)} := \sigma(A^{(l)}\vec h_u^{(l)}+B^{(l)}\vec h_v^{(l)}).$$

We first define a GatedGCN layer that approximates $(1+\epsilon)\vec h_u^{(l)}+\sum_{\{u,v\}\in E(G)} \vec h_v^{(l)}$. Then, two GatedGCN layers are defined that compute the MLP. For this assume that we are trying to approximate layer $l$ of the GIN and are defining layer $l'$ of the GatedGCN. By abuse of notation, we will refer to the hidden representations of the GIN of node $i$ in layer $l$ as $\vec h_u^{(l)}$ and refer the same for the GatedGCN as $\vec h_u^{(l')}.$ Note that, if we are approximating $\vec h_u^{(l)}$ by multiple GatedGCN layers starting at $\vec h_u^{(l')},$ then $\vec h_u^{(l')}\approx\begin{bmatrix}1\\\vec h_u^{(l)}\end{bmatrix},$ with equality in the first component.

Let $1>\alpha>0$, then by choosing $A:=\begin{bmatrix}\sigma^{-1}(1-\alpha)\mathbf 1 & 0\end{bmatrix},$ that is the first column is filled with $\sigma^{-1}(1-\alpha)$ and it is $0$ everywhere else, and $B:=0,$ $\sigma(A^{(l')}\vec h_u^{(l')}+B^{(l')}\vec h_v^{(l')}) = (1-\alpha)\mathbf 1.$ Let $\beta:=(1+\epsilon)(1-\alpha)$, then choosing $V^{(l')} = \begin{bmatrix}0&\mathbf 0^T\\\mathbf 0&I\\\mathbf 0&-I\end{bmatrix}$ and $U^{(l')}=\begin{bmatrix}1 &\mathbf 0^T\\ \mathbf 0&\beta I\\\mathbf 0&-\beta I\end{bmatrix},$ 
$$\text{ReLU}(U^{(l')}\vec h_u^{(l')}+\sum_{\{u,v\}\in E(G)} \eta_{ij}^{(l')}\odot V^{(l')}\vec h_v^{(l')}) =\begin{bmatrix}1\\\text{ReLU}((1-\alpha)\vec h_u^{(l)})\\\text{ReLU}(-(1-\alpha)\vec h_u^{(l)})\end{bmatrix}.$$
Notice that 
\begin{align*}
&\begin{bmatrix}\mathbf 0&I&-I\end{bmatrix}\text{ReLU}(U^{(l')}\vec h_u^{(l')}+\sum_{\{u,v\}\in E(G)} \eta_{ij}^{(l')}\odot V^{(l')}\vec h_v^{(l')}) \\
=&(1-\alpha)\left((1+\epsilon)\vec h_u^{(l)}+\sum_{\{u,v\}\in E(G)} \vec h_v^{(l)}\right).
\end{align*}
From here on refer to $$p_i:=(1+\epsilon)\vec h_u^{(l)}+\sum_{\{u,v\}\in E(G)} \vec h_v^{(l)}$$
and
$$p'_i:=\text{ReLU}(U^{(l')}\vec h_u^{(l')}+\sum_{\{u,v\}\in E(G)} \eta_{ij}^{(l')}\odot V^{(l')}\vec h_v^{(l')}).$$

Now assume that the MLP of the GIN in layer $l$ is defined as follows: $$\text{MLP}(x) := \text{ReLU}(W_2\text{ReLU}(W_1x)).$$ We can choose $V^{(l'+1)} = 0$, a zero matrix, and $U^{(l'+1)} = \begin{bmatrix}1&\mathbf 0^T & \mathbf 0^T\\\mathbf 0&W_1&-W_1\end{bmatrix},$ then regardless of the choice of $A^{(l'+1)}$ and $B^{(l'+1)},$
\begin{align*}
\text{ReLU}(U^{(l'+1)}p'_i+\sum_{\{u,v\}\in E(G)} \eta_{ij}^{(l'+1)}\odot V^{(l'+1)}p'_j) &= \text{ReLU}(U^{(l'+1)}p'_i) \\&= \begin{bmatrix}1\\\text{ReLU}((1-\alpha)W_1p_i)\end{bmatrix} \\&= \begin{bmatrix}1\\(1-\alpha)\text{ReLU}(W_1p_i)\end{bmatrix}=:p''_i.
\end{align*}
Similarly, choosing $V^{(l'+2)} = 0$ and $U^{(l'+2)} = \begin{bmatrix}1&\mathbf 0^T\\\mathbf 0&W_2\end{bmatrix},$
\begin{align*}
\text{ReLU}(U^{(l'+2)}p''_i+\sum_{\{u,v\}\in E(G)} \eta_{ij}^{(l'+2)}\odot V^{(l'+2)}p''_i) &=\begin{bmatrix}1\\\text{ReLU}(W_2(1-\alpha)\text{ReLU}(W_1p_i))\end{bmatrix}\\
&=\begin{bmatrix}1\\(1-\alpha)\text{ReLU}(W_2\text{ReLU}(W_1p_i))\end{bmatrix}\\
&=\begin{bmatrix}1\\(1-\alpha)h^{(l+1)}_i\end{bmatrix}.
\end{align*}
Considering the difference between the relevant components, we get that the approximation error is
\begin{align*}
    \max_{l,i}\lVert h^{(l)}_i-h^{(l')}_i\rVert &= \max_{l,i}\lVert h^{(l)}_i-(1-\alpha))h^{(l)}_i\rVert \\&= \max_{l,i}\lVert \alpha h^{(l)}_i\rVert \\&= \alpha \max_{l,i}\lVert h^{(l)}_i\rVert.
\end{align*}
Assuming that $\max_{l,i}\lVert h^{(l)}_i\rVert$ is bounded, this quantity can be arbitrarily small by choosing $\alpha$ small enough. To be clear, the assumption on the input we require is that $\max_{l,i}\lVert h^{(l)}_i\rVert$ is bounded and that each node carries a feature that is constant over all nodes, in this proof we assumed this constant to be 1, however, this is true for any such non-zero constant. Notably, we can also make weaker assumptions, by assuming that there is a linear combination of features that is constant over all nodes. However, in practice, we can simply add a constant 1 as a feature to each node, which is a common augmentation to ensure MPNNs can count the size of neighborhoods.
\end{proof}

\section{Additional expressivity experiment}
\begin{table}
\centering
\caption{The inclusion of hard graphs in \GPSEp pre-training is crucial to the model's success on expressivity datasets. \GPSEp performs noticeably better for 3-cycle and 4-cycle counting, while for 5-cycle and 6-cycle counting, the differences between the methods are reduced. Notably, \GPSEm outperforms GPSE, indicating that perhaps the use of RNI without adequate pre-training graphs leads to overfitting. We report the normalized mean absolute error $\downarrow$ of \GPSEm, GPSE, and \GPSEp of counting different subgraphs. This experiment is based on \citet{huang2023boosting}.}
\begin{tabular*}{\linewidth}{@{\extracolsep{\fill}}lccccc}
\toprule
Method$\downarrow$ \ \textbackslash \ Task$\rightarrow$&3-cycle&4-cycle&5-cycle&6-cycle\\\midrule 
GPSE-&0.208$\pm$0.002&0.167$\pm$0.001&0.155$\pm$0.001&\bftab{0.117$\pm$0.001}\\GPSE&0.220$\pm$0.002&0.174$\pm$0.002&0.161$\pm$0.001&0.122$\pm$0.001\\GPSE+&\bftab{0.147$\pm$0.002}&\bftab{0.156$\pm$0.001}&\bftab{0.152$\pm$0.001}&0.122$\pm$0.001\\
\bottomrule
\end{tabular*}
\label{tab:SC1}
\end{table}

\begin{table}
\centering
\caption{The inclusion of hard graphs in \GPSEp pre-training is crucial to the model's success on expressivity datasets. \GPSEp performs noticeably better for TT and CC counting while the differences between the methods for 4-clique and 4-path counting reduce. \GPSEm performs noticeably best for TR. We report the normalized mean absolute error $\downarrow$ of \GPSEm, GPSE, and \GPSEp of counting different subgraphs. TT refers to a tailed triangle. CC refers to a chordal cycle. TR refers to a triangle rectangle. This experiment is based on \citet{huang2023boosting}.}
\begin{tabular*}{\linewidth}{@{\extracolsep{\fill}}lccccccc}
\toprule
Method$\downarrow$ \ \textbackslash \ Task$\rightarrow$&TT&CC&4-Clique&4-Path&TR\\\midrule 
GPSE-&0.244$\pm$0.002&0.195$\pm$0.002&\bftab{0.147$\pm$0.001}&\bftab{0.063$\pm$0.000}&\bftab{0.160$\pm$0.002}\\GPSE&0.260$\pm$0.001&0.208$\pm$0.002&0.148$\pm$0.001&0.065$\pm$0.001&0.174$\pm$0.002\\GPSE+&\bftab{0.200$\pm$0.001}&\bftab{0.182$\pm$0.002}&0.149$\pm$0.001&0.067$\pm$0.000&0.168$\pm$0.001\\
\bottomrule
\end{tabular*}
\label{tab:SC2}
\end{table}
Here we include an additional expressivity experiment based on \citet{huang2023boosting}. The task is to count the inclusion of nodes in certain substructures on random Erd\H{o}s-R\'enyi graphs. As in the original work, we train 5-layer GatedGNNs \citep{li2015gated}. We run the experiment with 10 different random seeds and report the test MAE divided by the label standard deviation referred to as normalized MAE in \cref{tab:SC1} and \cref{tab:SC2}. Since GPSE is trained to predict cycle counts, we would expect GPSE to perform well in \cref{tab:SC1}. However, as demonstrated previously, GPSE only learns to count cycles for molecule-like graphs, because these are the only graphs included in the pre-training set. \GPSEp has additional hard graphs added to its pre-training set, which appears to help its performance somewhat, although its performance is still a far cry from other methods reported in \citet{huang2023boosting}. Perhaps adding Erd\H{o}s-R\'enyi graphs instead of random regular graphs would help with its performance. In some instances \GPSEm performs best, which might be due to \GPSEm overfitting less to the use of RNI.

\section{PSEs}
\label{app:PSE}

We consider a simple undirected and unweighted graph $G = (V, E)$ with vertex set $V$ and edge set $E$, and no node or edge features. The number of nodes and edges are denoted with $n = |V|$ and $m = |E|$, respectively. The corresponding adjacency matrix of graph $G$ is $\mathbf{M} \in \{0, 1\}^{n \times n}$, where $\mathbf{M}_{ij} = 1$ if $\{v_i, v_j\} \in E$ and 0 otherwise. The graph Laplacian $\mathbf{L}$ is then defined as

\begin{equation}
    \mathbf{L} = \mathbf{D} - \mathbf{M}
\end{equation}

where $\mathbf{D} \in \mathbb{N}^{n \times n}$ is the diagonal degree matrix such that $\mathbf{D}_{ii} = deg(v_i) = |\mathcal{N}(v_i)| = |\{ u | (v_i, u) \in E\}|$.

The graph Laplacian is a real symmetric matrix, with its full eigendecomposition as

\begin{equation}
    \mathbf{L} = \mathbf{U} \mathbf{\Lambda} \mathbf{U}^\top
\end{equation}

where, $\mathbf{\Lambda}_{ii} = \lambda_i$ and $\mathbf{U}_{[:, i]} = u_i$ are the $i^{\text{th}}$ eigenvalue and eigenvector (an eigenpair) of the graph Laplacian.
We sort the eigenpairs from the smallest to largest eigenvalue, i.e., $0 = \lambda_1 \leq \lambda_2 \leq \dots \leq \lambda_n$. We further denote $\hat{\mathbf{U}}$ (and analogously the subdiagonal matrix $\hat{\mathbf{\Lambda}}$) as the matrix of Laplacian eigenvectors corresponding to non-trivial eigenvalues.

\begin{equation}
    \hat{\mathbf{U}} = \mathbf{U}_{[:, \{i | \lambda_{i} \neq 0\}]}
\end{equation}

Finally, we denote the ($\ell_2$) normalization operation as $\operatorname{normalize}(x) := \frac{x}{\|x\|_2}$

In \citet{liu2023graph}, the node-level PSEs presented below are normalized to zero mean and unit standard deviation per graph when training GPSE; we follow this for training stability in training \GPSEp and \GPSEm. When used as standalone or combined benchmarks, the PSEs are not normalized.

\textbf{Laplacian eigenvector positional encodings (LapPE)}

LapPE is defined as the absolute value of the $\ell_2$ normalized eigenvectors associated with non-trivial eigenvalues. By default, we use the first four LapPE to train GPSE.

\begin{equation}
    \text{LapPE}_{i} = |\operatorname{normalize}(\hat{\mathbf{U}}_{[:, i]})|
\end{equation}

The absolute value operation is needed to counter the sign ambiguity of the graph Laplacian eigenvectors, a known issue to many previous works that use the Laplacian eigenvectors to augment the models~\citep{dwivedi2020benchmarkgnns,lim2022sign}. We maintain learning this absolute value function of the eigenvectors as per \citet{liu2023graph}, and similarly learn the eigenvalues as graph-level attributes.

\paragraph{Random walk structural encodings (RWSE)}

The random walk matrix is defined as the row-normalized adjacency matrix $\mathbf{P} := \mathbf{D}^{-1} \mathbf{M}$. $\mathbf{P}_{i,j}$ corresponds to the transition probability from $v_i$ to $v_j$ at a given step.

The $k^\text{th}$ RWSE~\citep{dwivedi2021graph} is defined as the probability of returning back to the starting state of a random walk after $k$ steps of a random walk:

\begin{equation}
    \text{RWSE}_{k} = diag(\mathbf{P}^{k})
\end{equation}

\paragraph{AllPSE}

\citet{liu2023graph} compare the learned encodings (GPSE) with combinations of multiple PSEs in addition to the standalone LapPE and RWSE for a fairer comparison. Two ``combined'' encodings are considered: Combining the two standalone encodings as LapPE$+$RWSE, and AllPSE, which represents a combination of \emph{all} PSEs used in GPSE training. The following encodings are thus not used in standalone form, but are relevant in that they are essential to GPSE training as well as the AllPSE benchmark.

\paragraph{Electrostatic potential positional encodings (ElstaticPE)}

ElstaticPE treats each node as a charged particle to compute electrostatic potentials between node pairs, based on the pseudoinverse of the graph Laplacian $\mathbf{L}^\dagger$ with the diagonal set to 0 (so that the potential of each node each node's potential on itself is 0) resulting in:

\begin{equation}
    \mathbf{Q} = \mathbf{L}^\dagger - diag(\mathbf{L}^\dagger) \mathbf{1}_{n}
\end{equation}

ElstaticPE for any node $v_i$ is then a collection of statistics that summarizes the electrostatic interaction of $v_i$ with other nodes, such as the minimum, mean and standard deviations of potentials from $v_i$ to all other nodes $v_j$ or only its neighbors $v_k \in \mathcal{N}(v_i)$~\citep{kreuzer2021rethinking}. We defer to \citet{liu2023graph} for more detailed definitions of its individual components.

\paragraph{Heat kernel diagonal structural encodings (HKdiagSE)} As defined in \citet{liu2023graph}:

\begin{equation}
    \text{HKdiagSE}_{k} = \sum_{i: \lambda_i \neq 0} e^{-k \lambda_i} \operatorname{normalize}(\mathbf{U}_{[:, i]})^2
\end{equation}

\paragraph{Cycle counting structural encodings (CycleSE)}

CycleSE counts the number of $k$-cycles in the graph (e.g., 3-cycles correspond to triangles in the graph) to encode global graph structure. Similar to LapPE eigen\emph{values}, CycleSE is learned as a graph-level regression task in GPSE training.

\begin{equation}
    \text{CycleSE}_{k} = |\{\text{Cycles of length k}\}|
\end{equation}

\section{Training and Evaluation Details}
\label{app:training_details}

This work is built upon \citet{liu2023graph} and uses the exact same experimental setup as well as largely the same code. The code itself is modified to accommodate the additional experiments proposed here. That is, for the experiments shown in \cref{fig:zinc_fewshot} and \cref{figure:additionaldatasets_fewshot}, as well as the additional datasets CEXP, TRI, and TRIX, presented in \cref{tab:WL}. We thus refer to \citet{liu2023graph} for further details beyond those presented here. Details on model design are shown in \cref{tab:hp_mol_bench} and \cref{tab:training_params}.

\begin{table}[!htp]\centering
\caption{GPS+method hyperparameters for molecular property prediction benchmarks}\label{tab:hp_mol_bench}
\begin{tabular}{lcccc}\toprule
Hyperparameter &ZINC-12k \\
\midrule
\# GPS Layers &10 \\
Hidden dim &64 \\
GPS-MPNN &GINE \\
GPS-SelfAttn &-- \\
\# Heads &4 \\
Dropout &0.00 \\
Attention dropout &0.50 \\
Graph pooling &mean \\
\midrule
PE dim &32 \\
PE encoder &2-Layer MLP \\
Input dropout &0.50 \\
Output dropout &0.00 \\
Batchnorm &yes \\
\midrule
Batch size &32 \\
Learning rate &0.001 \\
\# Epochs &200 \\
\# Warmup epochs &50 \\
Weight decay &1.00e-5 \\
\midrule
\# Parameters &292,513 \\
PE precompute &2 min \\
Time (epoch/total) &10s/5.78h \\
\bottomrule
\end{tabular}
\end{table}

\textbf{MoleculeNet small benchmarks settings} We used the default GINE architecture following previous studies~\citep{Hu2020Strategies}, which has five hidden layers and 300 hidden dimensions. For all five benchmarks, we use the same GPSE processing encoder settings as shown in \cref{tab:hp_molnet_small_bench}.

\textbf{CSL, EXP, and TRI synthetic graph benchmarks settings} We follow~\citet{he2023generalization} and use GIN~\citep{xu2019how} as the underlying MPNN model, with five hidden layers and 128 dimensions. We use the same GPSE processing encoder settings for CSL, EXP, CEXP, TRI, and TRIX as shown in \cref{tab:hp_wl_bench}.

\begin{table}[ht]
    \caption{GPSE processing encoder hyperparameters for MoleculeNet small benchmarks and synthetic WL graph benchmarks.}
    \label{tab:training_params}
    \begin{subtable}[t]{.49\textwidth}\centering
    \caption{MoleculeNet small benchmarks settings.}\label{tab:hp_molnet_small_bench}
    \begin{tabular*}{\linewidth}{@{\extracolsep{\fill}}lc}\toprule
    Hyperparameter & \\
    \midrule
    PE dim &64 \\
    PE encoder &Linear \\
    Input dropout &0.30 \\
    Output dropout &0.10 \\
    Batchnorm &yes \\
    \midrule
    Learning rate &0.003 \\
    \# Epochs &100 \\
    \# Warmup epochs &5 \\
    Weight decay &0 \\
    \bottomrule
    \end{tabular*}
    \end{subtable}
    \hfill
    \begin{subtable}[t]{.49\textwidth}\centering
    \caption{Synthetic WL graph benchmarks settings.}\label{tab:hp_wl_bench}
    \begin{tabular*}{\linewidth}{@{\extracolsep{\fill}}lc}\toprule
    Hyperparameter & \\
    \midrule
    PE dim &128 \\
    PE encoder &Linear \\
    Input dropout &0.00 \\
    Output dropout &0.00 \\
    Batchnorm &yes \\
    \midrule
    Learning rate &0.002 \\
    \# Epochs &200 \\
    Weight decay &0 \\
    \bottomrule
    \end{tabular*}
    \end{subtable}
\end{table}

\subsection{Convergence to PSEs Performance}
This experiment is modeled after the experiment in Table 3 of \citet{liu2023graph}. Different downstream models (GCN, GatedGCN, GIN, and GPS) are trained using varying PSEs (none, rand, LapPE, RWSE, AllPSE) as well as GPSE, where as per usual we use a pre-trained GPSE node encoder trained on MolPCBA, on ZINC-12k. The exact architectures, like the number of layers and embedding sizes of the downstream models, vary based on previous studies. We refer to the human-readable YAML files in our code and the code of \citet{liu2023graph}, for the exact architectures. As in \citet{liu2023graph}, GCN, GatedGCN, and GIN are trained using Adam with a learning rate of $10^{-3}$, a weight decay of $10^{-5}$, for 1000 epochs with early stopping based on a validation split. Additionally, they use PyTorchs ReduceLROnPlateau to multiply the learning rate with 0.5 if the loss does not reduce for 10 consecutive epochs. GPS on other hand uses AdamW with the same learning rate and weight decay, but trained for 2000 epochs with the learning rate modified by PyTorchs CosineAnnealingWarmRestarts with 50 warmup epochs a typical choice for transformer models. For \cref{table:early_stop} we collect for each backbone $B$ using each PSE (except GPSE) $P$ the number of epochs $e_B^P$ until the best validation performance is achieved as well as the test performance at this point. We then compute how many epochs GPSE takes to reach this same test performance $e_B^{\text{GPSE}>P}$. We report $\frac{e_B^P}{e_B^{\text{GPSE}>P}}$ that is the speedup of GPSE over $P$.

\subsection{Influence of Sample Size}
This experiment is modeled after the experiment in Table 3 and Table 4 of \citet{liu2023graph} considering only the GPS backbone. The GPS backbone is trained with varying PSEs (none, rand, LapPE, RWSE, AllPSE) as well as GPSE, \GPSEm, and \GPSEp on ZINC-12k. Also, a GINE backbone is trained with varying PSEs (LapPE, RWSE, AllPSE) as well as GPSE, \GPSEm, and \GPSEp on MolNet. All of these models are trained for a varying fraction of the dataset. The base case for this is a split of 80\% for the training set, 10\% for the validation set, and 10\% for the test set. This 80\% for the training set is then progressively reduced by factors of 2 without changing the size of the validation and test sets. The smallest training set fraction is $1.25\% = 80\%\frac{1}{2^6}$ for a total of 7 different training set fractions. We average each of these runs over 10 random seeds. Finally, we subtract from each method except GPSE the performance of GPSE for the respective training set fraction and report the results in \cref{fig:zinc_fewshot} and \cref{figure:additionaldatasets_fewshot} including their standard deviation.

\subsection{Generalization of PSEs}
This experiment is almost the same as \cref{table:training_convergence}. The only difference is that we do not report the speedup but instead report the MAE on the test set in \cref{table:test_zinc} as well as the train set in \cref{table:train_zinc}. In addition, we also report the difference of the two in \cref{table:generalizationerror_zinc}. Following the experiment from \cref{fig:zinc_fewshot} and \cref{figure:additionaldatasets_fewshot} we also conduct this experiment for the MolNet datasets training as before a GINE backbone instead of a GPS backbone for a slightly different set of PSEs. We present the AUROC on the test set in \cref{table:test_molnet} and the difference between the AUROC on the test and train set in \cref{table:generalizationerror_molnet}.

\section{Datasets}
\label{app:datasets}

A collection of task information and classical graph properties for all considered datasets are presented in \cref{table:dataset_info} and \cref{table:dataset_props}. What follows are short descriptions and citations for each of the used datasets.

\begin{table}\centering
\caption{Task information for datasets used in transferability experiments. TRIX$^*$ shows the information for the test set of TRIX.}
\label{table:dataset_info}
\begin{tabular*}{\linewidth}{@{\extracolsep{\fill}}lrrrccrc}
\toprule
\multirow{2}{*}{Dataset} & \multicolumn{1}{c}{Num.}   & \multicolumn{1}{c}{Num.}   & \multicolumn{1}{c}{Num.}   & \multicolumn{1}{c}{Pred.} & \multicolumn{1}{c}{Pred.}                   & \multicolumn{1}{c}{Num.}  & \multirow{2}{*}{Metric} \\
                         & \multicolumn{1}{c}{graphs} & \multicolumn{1}{c}{nodes}  & \multicolumn{1}{c}{edges}  & \multicolumn{1}{c}{level} & \multicolumn{1}{c}{task}                    & \multicolumn{1}{c}{tasks} &                         \\ \midrule
ZINC-12k                 & 12,000                     & 23.15  & 24.92  & graph                     & reg.              & 1     & MAE                     \\
MolHIV                   & 41,127                     & 25.51  & 27.46  & graph                     & class. (binary) & 1     & AUROC                 \\
MolPCBA                  & 437,929                    & 25.97  & 28.11  & graph                     & class. (binary) & 128   & AP                      \\
MolBBBP                  & 2,039                      & 24.06  & 25.95  & graph                     & class. (binary) & 1     & AUROC                 \\
MolBACE                  & 1,513                      & 34.09  & 36.86  & graph                     & class. (binary) & 1     & AUROC                 \\
MolTox21                 & 7,831                      & 18.57  & 19.29  & graph                     & class. (binary) & 21    & AUROC                 \\
MolToxCast               & 8,576                      & 18.78  & 19.26  & graph                     & class. (binary) & 617   & AUROC                 \\
MolSIDER                 & 2,039                      & 33.64  & 35.36  & graph                     & class. (binary) & 27    & AUROC                 \\
CSL                      & 150                        & 41.00  & 82.00  & graph                     & class. (10-way) & 1     & ACC                \\
EXP                      & 1,200                      & 48.70  & 60.44  & graph                     & class. (binary) & 1     & ACC                \\
CEXP                     & 1,200                      & 55.78  & 69.78  & graph                     & class. (binary) & 1     & ACC                \\
TRI                      & 1,000                      & 20.00  & 30.00  & node                     & class. (binary) & 1     & ACC                \\
TRIX$^*$                     & 1,000                      & 100.00  & 150.00  & node                     & class. (binary) & 1     & ACC                \\
\bottomrule
\end{tabular*}
\end{table}

\begin{table}\centering
\caption{Classical graph properties of graph-level datasets used in transferability experiments. TRIX$^*$ shows the classical graph properties for the test set of TRIX.}
\label{table:dataset_props}
\centering
\resizebox{\textwidth}{!}{
\begin{tabular}{lrrrrrrrrrr}\toprule
    &Num. &Num. &\multirow{2}{*}{Density} &\multirow{2}{*}{Connectivity} &\multirow{2}{*}{Diameter} &Approx. &\multirow{2}{*}{Centrality} &Cluster. &Num. \\
&nodes &edges & & & &max clique & &coeff. &triangles \\\midrule
    ZINC-12k &23.15 &24.92 &0.101 &1.00 &12.47 &2.06 &0.101 &0.006 &0.06\\
    MolHIV &25.51 &27.46 &0.103 &0.927 &11.06 &2.02 &0.103 &0.002 &0.03\\
    MolPCBA &25.97 &28.11 &0.093 &0.998 &13.56 &2.02 &0.093 &0.002 &0.02\\
    MolBBBP &24.06 &25.95 &0.114 &0.950 &10.75 &2.03 &0.114 &0.003 &0.03\\
    MolBACE &34.09 &36.86 &0.070 &1.00 &15.22 &2.10 &0.070 &0.007 &0.10\\
    MolTox21 &18.57 &19.29 &0.157 &0.976 &9.37 &2.02 &0.159 &0.003 &0.03\\
    MolToxCast &18.78 &19.26 &0.154 &0.803 &7.57 &2.02 &0.154 &0.003 &0.03\\
    MolSIDER &33.64 &35.36 &0.103 &0.856 &12.45 &2.02 &0.120 &0.004 &0.04\\
    CSL &41.00 &82.00 &0.100 &3.98 &6.00 &2.10 &0.100 &0.050 &4.10\\
    EXP &48.70 &60.44 &0.054 &0.00 &1.00 &2.00 &0.054 &0.000 &0.00\\
    CEXP &55.78 &69.78 &0.047 &0.19 &4.03 &2.00 &0.047 &0.000 &0.00\\
    TRI &20.00 &30.00 &0.158 &2.62 &5.14 &2.84 &0.158 &0.088 &1.75\\
    TRIX$^*$ &100.00 &150.00 &0.030 &2.94 &8.76 &2.81 &0.030 &0.017 &1.66\\
\bottomrule
\end{tabular}}
\end{table}

\textbf{MolPCBA}~\citep{hu2020ogb} (MIT License) contains 400K small molecules derived from the MoleculeNet benchmark~\citep{wu2018moleculenet}. There are 323,555 unique molecular graphs in this dataset.

To extract unique molecular graphs, we use RDKit with the following steps:
\begin{enumerate}
    \item For each molecule, convert all its heavy atoms to carbon and all its bonds to single-bond.
    \item Convert the modified molecules into a list of SMILES strings.
    \item Reduce the list to unique SMILES strings using the \texttt{set()} operation in Python.
\end{enumerate}

\textbf{ZINC-12k}~\citep{dwivedi2022benchmarkinggraphneuralnetworks} (Custom license, free to use) is a 12K subset of the ZINC250K dataset~\citep{gomez2018auto}. Each graph is a molecule whose nodes are atoms (28 possible types) and whose edges are chemical bonds (3 possible types). The goal is to regress the constrained solubility~\citep{dwivedi2022benchmarkinggraphneuralnetworks} (logP) of the molecules. This dataset comes with a pre-defined split with 10K training, 1K validation, and 1K testing samples.

\textbf{MoleculeNet small datasets}~\citep{hu2020ogb} (MIT License) We follow~\citet{sun2022does} and use the selection of five small molecular property prediction datasets from the MoleculeNet benchmarks, including BBBP, BACE, Tox21, ToxCast, and SIDER. Each graph is a molecule, and it is processed the same way as for MolHIV and MolPCBA. All these datasets adopt the \textit{scaffold splitting} strategy that is similarly used on MolHIV and MolPCBA.

\textbf{CSL}~\citep{dwivedi2022benchmarkinggraphneuralnetworks} (MIT License) contains 150 graphs that are known as circular skip-link graphs~\citep{murphy2019relational}. The goal is to classify each graph into one of ten isomorphism classes. The dataset is class-balanced, where each isomorphism class contains 15 graph instances. Splitting is done by stratified five-fold cross-validation.

\textbf{EXP and CEXP}~\citep{abboud2020surprising} (\textit{unknown} license) contain 600 pairs of graphs (1,200 graphs in total) that cannot be distinguished by 1\&2-WL tests. The goal is to classify each graph into one of two isomorphism classes. Splitting is done by stratified five-fold cross-validation. CEXP is a modified version of EXP, where 50\% of pairs are slightly modified to be distinguishable by 1-WL.

\textbf{TRI and TRIX}~\citep{sato2021random} (\textit{unknown} license) contain 1000 3-regular 20-order graphs. The goal is to classify for each node whether it is contained in a triangle. Since the node colors of regular graphs are stable under 1-WL, MPNNs cannot tell nodes in these graphs apart, and thus, this task cannot be solved by vanilla MPNNs. TRIX is an extrapolation dataset of TRI, that shares the exact same training (and validation) dataset, however, the testset contains 1000 3-regular 100-order graphs. Splitting is done by five-fold cross-validation.

\section{GPSE Convergence to PSEs Performance}
\label{app:convergence}

Here we present additional results showing the speedup \GPSEm (\cref{table:speedup_GPSE-}) and \GPSEp (\cref{table:speedup_GPSE+}) offer compared to the baseline PSEs instead of GPSE on ZINC-12k.

We also present the average epoch that early stopping stopped on for the GPSE variants and all PSEs in \cref{table:early_stop}. This table shows that GPSE and its variants do not always converge faster to their respective optima when compared to other PSEs.

Further, we also present the speedup GPSE offers compared to baseline PSEs on MolNet in \cref{table:speedup_MolNet}. Note that since GPSE does not always increase the overall performance, there are cases in which GPSE never reaches a better value than the baseline PSE. Thus, we mark each entry with an exponent indicating the number of seeds on which GPSE outperformed the baseline PSE and the statistics for these seeds. If GPSE performs better on at least one seed, the average and standard deviation are shown. If there is no such seed, the table reads inf.
\begin{table}
\centering
\caption{Using \GPSEm on ZINC-12k significantly reduces the number of epochs needed to reach the same test performance as other PSEs. This table shows the factor of speedup (in terms of epochs) of \GPSEm to reach the same MAE on ZINC-12k on various downstream GNNs and PSEs. For instance, $2$ means \GPSEm reaches the same performance with only half of the epochs.}
\begin{tabular*}{\linewidth}{@{\extracolsep{\fill}}lccccc}
\toprule
PSE$\downarrow$ \ \textbackslash \ Downstream$\rightarrow$ &GCN&GatedGCN&GIN&GPS\\\midrule 
none&68.3$\pm$8.3&68.8$\pm$10.1&58.2$\pm$12.0&11.9$\pm$1.5\\rand&905.0$\pm$0.0&110.0$\pm$0.0&253.0$\pm$0.0&95.0$\pm$16.1\\LapPE&46.5$\pm$13.0&47.4$\pm$7.6&33.2$\pm$3.3&14.1$\pm$1.8\\RWSE&20.6$\pm$2.2&26.4$\pm$2.5&21.4$\pm$4.5&2.7$\pm$0.3\\AllPSE&13.4$\pm$1.2&18.7$\pm$1.7&18.9$\pm$2.5&2.6$\pm$0.2\\
\bottomrule
\end{tabular*}
\label{table:speedup_GPSE-}
\end{table}

\begin{table}
\centering
\caption{Using \GPSEp on ZINC-12k significantly reduces the number of epochs needed to reach the same test performance as other PSEs. This table shows the factor of speedup (in terms of epochs) of \GPSEp to reach the same MAE on ZINC-12k on various downstream GNNs and PSEs. For instance, $2$ means \GPSEp reaches the same performance with only half of the epochs.}
\begin{tabular*}{\linewidth}{@{\extracolsep{\fill}}lccccc}
\toprule
PSE$\downarrow$ \ \textbackslash \ Downstream$\rightarrow$
&GCN&GatedGCN&GIN&GPS\\\midrule
none&68.9$\pm$13.5&66.1$\pm$12.7&56.9$\pm$10.2&11.6$\pm$1.3\\rand&905.0$\pm$0.0&110.0$\pm$0.0&227.7$\pm$50.6&84.4$\pm$17.2\\LapPE&46.1$\pm$7.5&45.8$\pm$3.7&33.7$\pm$4.7&13.7$\pm$1.6\\RWSE&20.6$\pm$3.7&29.8$\pm$3.1&20.4$\pm$3.0&2.8$\pm$0.4\\AllPSE&13.0$\pm$2.2&21.1$\pm$3.3&18.7$\pm$2.5&2.6$\pm$0.2\\
\bottomrule
\end{tabular*}
\label{table:speedup_GPSE+}
\end{table}

\begin{table}
\centering
\caption{GPSE variants do not actually converge faster than other methods in optimal performance. This table shows the epoch on which early stopping stopped according to MAE on ZINC-12k on various downstream GNNs and PSEs.}
\begin{tabular*}{\linewidth}{@{\extracolsep{\fill}}lccccc}
\toprule
PSE$\downarrow$ \ \textbackslash \ Downstream$\rightarrow$
&GCN&GatedGCN&GIN&GPS\\\midrule 
none&\utab{827.7$\pm$201.4}&591.2$\pm$348.4&863.8$\pm$221.9&1304.9$\pm$311.5\\rand&\bftab{667.1$\pm$219.9}&\bftab{189.7$\pm$114.9}&\bftab{386.5$\pm$206.0}&\bftab{167.0$\pm$34.4}\\LapPE&\utab{669.9$\pm$239.2}&628.6$\pm$256.8&838.3$\pm$108.1&1070.3$\pm$466.2\\RWSE&934.5$\pm$75.1&410.3$\pm$281.1&630.7$\pm$255.5&1373.2$\pm$194.9\\AllPSE&\utab{715.7$\pm$169.6}&425.7$\pm$252.0&\utab{506.4$\pm$316.9}&1543.8$\pm$256.4\\GPSE&935.7$\pm$66.9&803.9$\pm$163.1&919.2$\pm$55.2&1238.2$\pm$224.9\\\GPSEm&939.1$\pm$68.4&875.5$\pm$108.4&933.9$\pm$72.4&1497.5$\pm$243.8\\\GPSEp&934.1$\pm$44.0&857.0$\pm$171.7&910.8$\pm$119.4&1426.4$\pm$166.6\\
\bottomrule
\end{tabular*}
\label{table:early_stop}
\end{table}

\begin{table}
\centering
\caption{No clear improvements can be seen for GPSE on the MolNet datasets with regard to the number of epochs needed to reach the same test performance as other PSEs. This table shows the factor of speedup (in terms of epochs) of GPSE to reach the same AUC on various MolNet datasets and PSEs. For instance, $2$ means GPSE reaches the same performance with only half of the epochs. The exponent indicates the number of seeds for which GPSE achieved a better performance than the respective PSE. inf indicates that there is no seed for which GPSE outperforms the baseline PSE.}
\resizebox{\textwidth}{!}{
\begin{tabular}{lcccccccc}
\toprule
PSE $\downarrow$ \ \textbackslash \ Dataset$\rightarrow$ &BACE&BBBP&ClinTox&HIV&MUV&SIDER&Tox21&ToxCast\\\midrule
GraphLog&inf&inf&inf&inf&inf&4.2$\pm$0.5$^{10}$&1.2$\pm$0.1$^{3}$&2.1$\pm$0.5$^{10}$\\LapPE&17.8$\pm$3.3$^{10}$&inf&inf&inf&inf&5.8$\pm$0.9$^{10}$&inf&inf\\RWSE&20.6$\pm$2.2$^{10}$&inf&inf&inf&inf&6.3$\pm$0.7$^{10}$&inf&inf\\AllPSE&3.9$\pm$0.7$^{10}$&inf&inf&inf&0.9$\pm$0.1$^{10}$&2.6$\pm$0.3$^{8}$&inf&inf\\
\bottomrule
\end{tabular}}
\label{table:speedup_MolNet}
\end{table}

\section{Influence of Sample Size on MolNet}
\label{appendix:fewshot}

\cref{figure:additionaldatasets_fewshot} shows the influence of the size of the training dataset on performance for all MolNet datasets. In the main body of this work, we presented the figure for ToxCast, which is arguably the worst case for GPSE, as for a smaller fraction of available training data, all PSEs overtake GPSE in performance. Tox21 is notably another good example of GPSE, as the difference in performance grows for a smaller training data ratio. In combination, these figures indicate that GPSE's performance in a data-scarce regime depends on the dataset in question, even though GPSE is better than other PSEs in most cases.

\begin{figure}
\includegraphics[height=33ex]{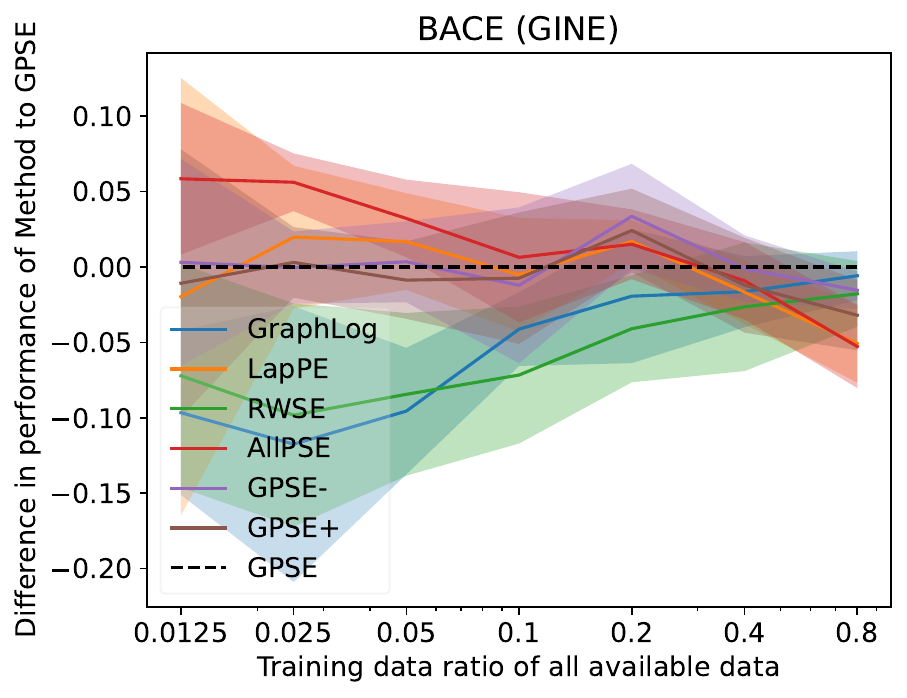}
\includegraphics[height=33ex]{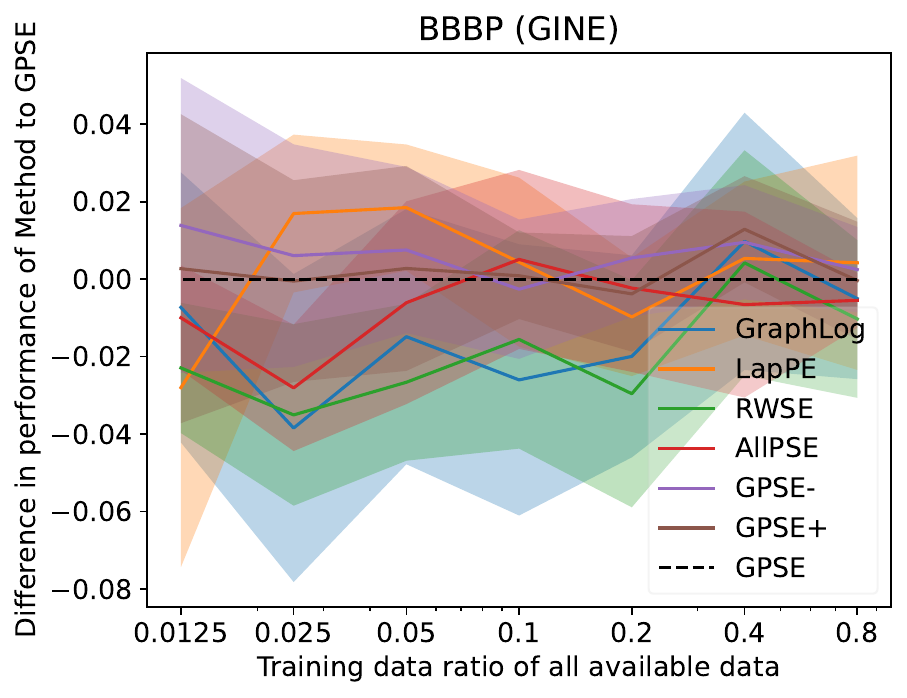}
\includegraphics[height=33ex]{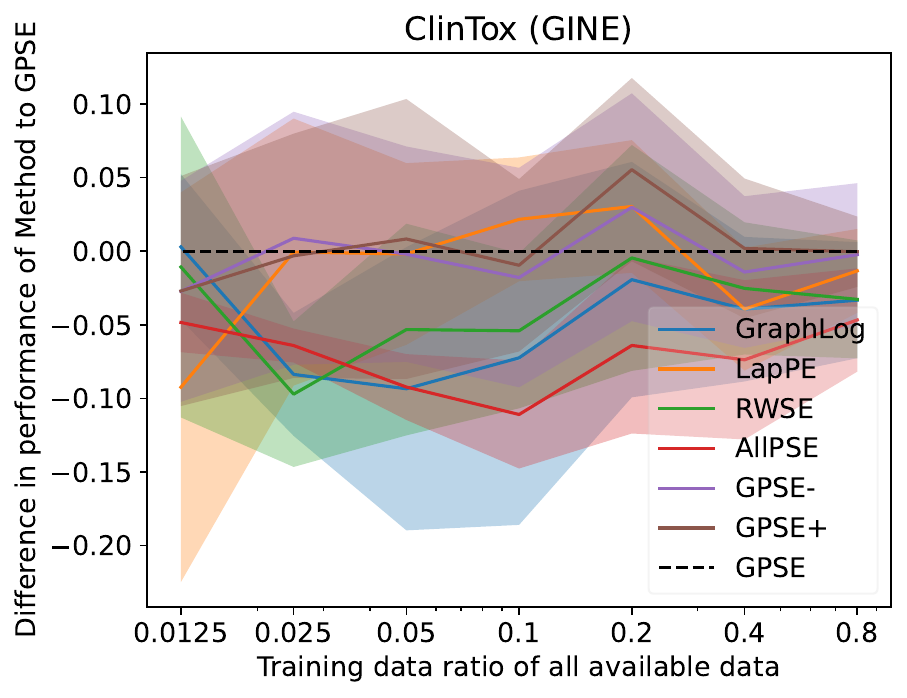}
\includegraphics[height=33ex]{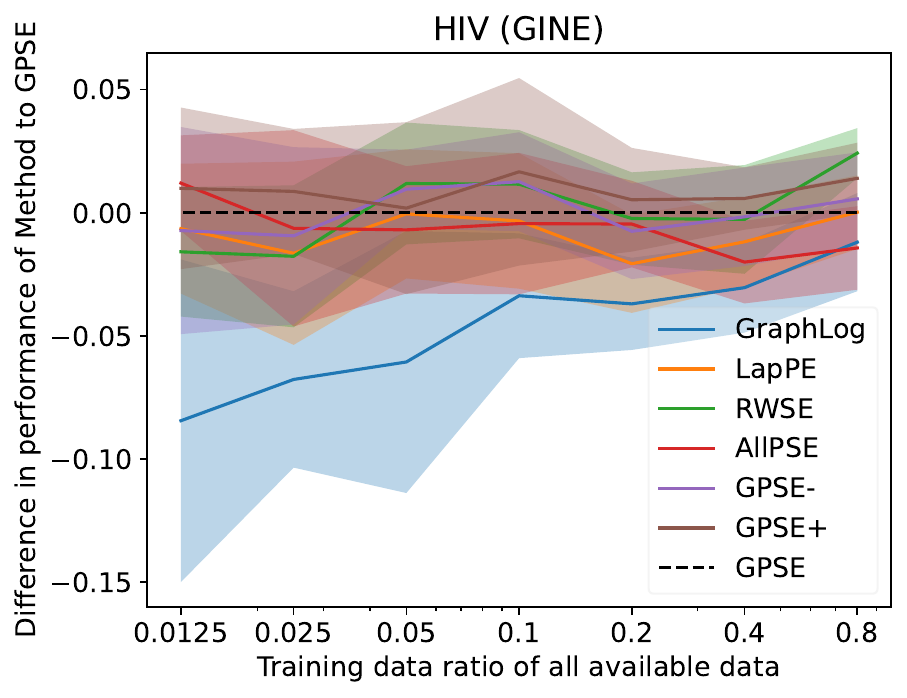}
\includegraphics[height=33ex]{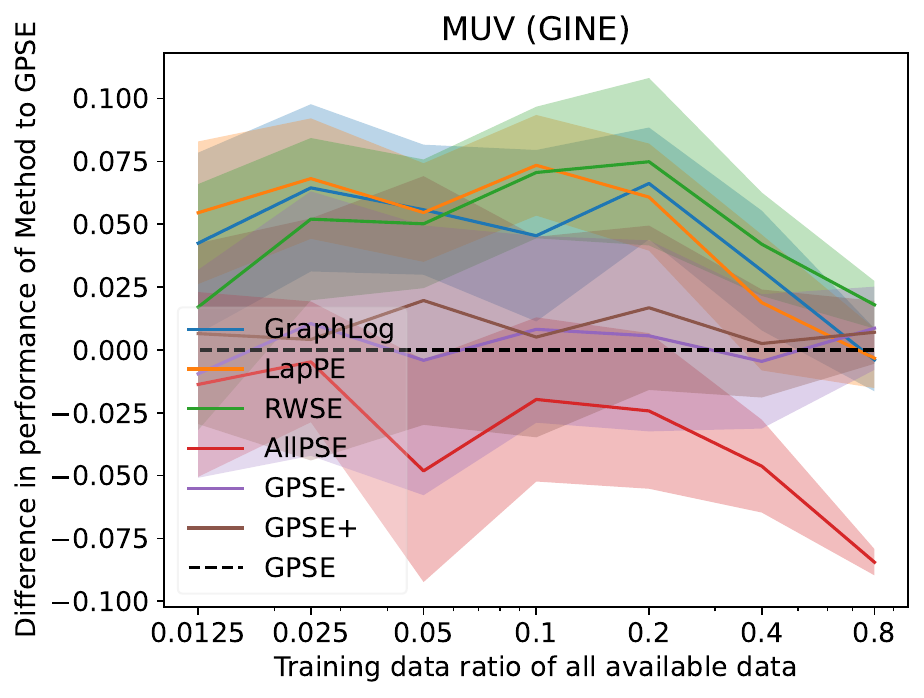}
\includegraphics[height=33ex]{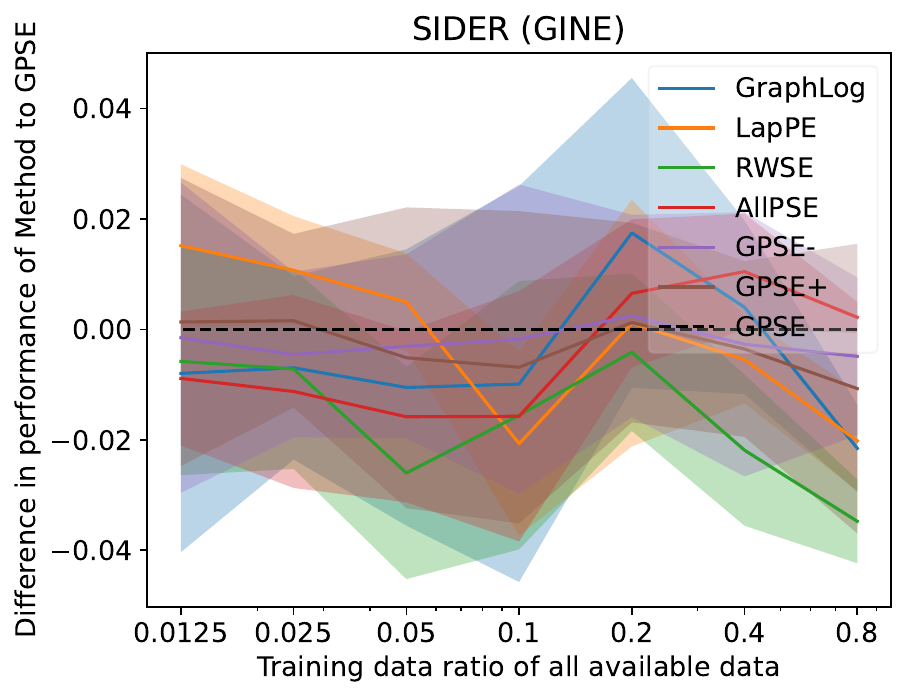}
\includegraphics[height=33ex]{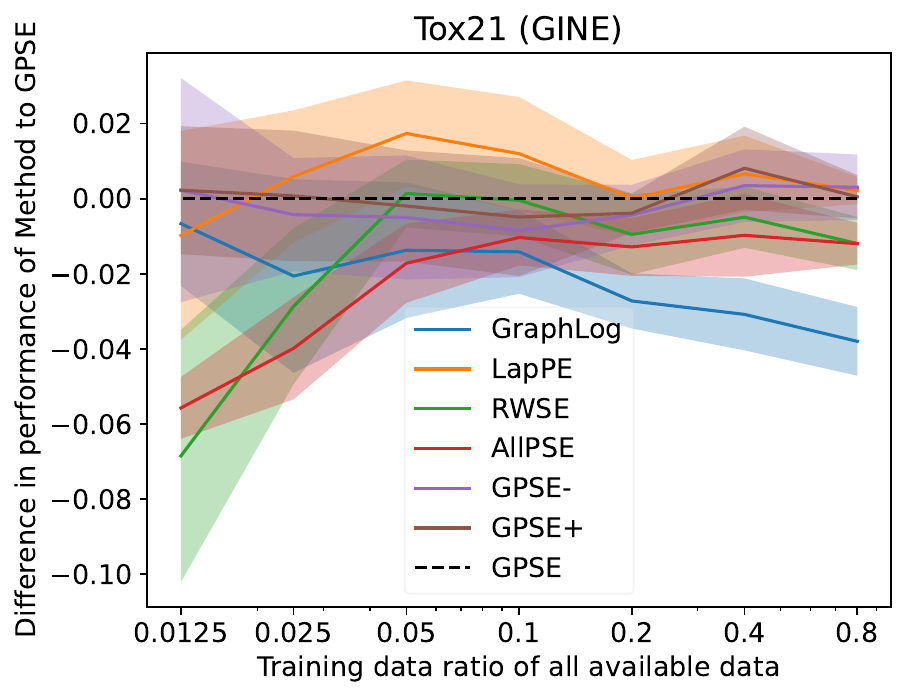}
\hspace{22.8ex}\includegraphics[height=33ex]{fig/fewshot_toxcast.pdf}
\caption{In most cases, GPSE variants outperform other PSEs regardless of available training data on the MolNet datasets. There are, however, also examples where GPSE is outperformed by other PSEs for less available training data. Results on additional datasets beyond what is shown in \cref{fig:zinc_fewshot}. Downstream training with fractions of the MolNet datasets. We show the difference in performance (AUROC $\uparrow$) obtained by various PSEs, including \GPSEm and \GPSEp.}
\label{figure:additionaldatasets_fewshot}
\end{figure}

\section{Generalization of PSEs}
\label{appendix:generalization}

Here we show the test performance in relation to \cref{table:generalizationerror_zinc} and \cref{table:generalizationerror_molnet}. As already shown in \citet{liu2023graph}, \cref{table:test_zinc} and \cref{table:test_molnet} show the test performance of various PSEs against various downstream backbones (on ZINC-12k showing MAE) and various MolNet datasets (showing AUROC). Notably, GPSE and its variants perform very well and usually either have the best performance or are only insignificantly worse than the best performance. Notably, for only 3 datasets (BACE, HIV, and MUV), some GPSE variants don't belong to the best performers.

\begin{table}[t]
\centering
\caption{GPSE and its variants always perform better than other PSEs regardless of the backbone on ZINC-12k. In only one case, that is, using the GPS backbone, AllPSE performs only insignificantly worse than the GPSE variants. This table shows the test MAE ($\downarrow$) on ZINC-12k for various backbone networks and PSEs.}
\begin{tabular*}{\linewidth}{@{\extracolsep{\fill}}lccccc}
\toprule
PSE $\downarrow$ \ \textbackslash \ Downstream$\rightarrow$   &GCN&GatedGCN&GIN&GPS\\\midrule
none&0.289$\pm$0.008&0.244$\pm$0.011&0.278$\pm$0.013&0.123$\pm$0.01\\rand&1.255$\pm$0.027&1.221$\pm$0.008&1.242$\pm$0.015&0.87$\pm$0.014\\LapPE&0.218$\pm$0.008&0.189$\pm$0.007&0.217$\pm$0.005&0.141$\pm$0.078\\RWSE&0.177$\pm$0.004&0.165$\pm$0.003&0.175$\pm$0.005&0.075$\pm$0.005\\AllPSE&0.152$\pm$0.003&0.142$\pm$0.005&0.15$\pm$0.004&\utab{0.071$\pm$0.009}\\GPSE&\bftab{0.125$\pm$0.003}&\bftab{0.112$\pm$0.003}&\bftab{0.127$\pm$0.003}&\utab{0.07$\pm$0.005}\\\GPSEm&0.128$\pm$0.003&\utab{0.114$\pm$0.003}&\utab{0.129$\pm$0.003}&\utab{0.067$\pm$0.004}\\\GPSEp&\utab{0.127$\pm$0.002}&\utab{0.113$\pm$0.002}&\utab{0.128$\pm$0.002}&\bftab{0.066$\pm$0.003}\\
\bottomrule
\end{tabular*}
\label{table:test_zinc}
\end{table}

\begin{table}[t]
\centering
\caption{GPSE or one of its variants frequently perform best or only insignificantly worse than other PSEs. This table shows the test AUROC($\uparrow$) on MolNet datasets for different PSEs, with a GINE downstream network.}
\resizebox{\textwidth}{!}{
\begin{tabular}{lcccccccc}
\toprule
PSE $\downarrow$ \ \textbackslash \ Dataset$\rightarrow$ &BACE&BBBP&ClinTox&HIV&MUV&SIDER&Tox21&ToxCast\\\midrule
GraphLog&\utab{82.6$\pm$1.6}&\utab{66.8$\pm$2.1}&\utab{73.4$\pm$3.9}&75.0$\pm$2.0&74.9$\pm$1.3&60.6$\pm$0.8&73.4$\pm$0.9&63.0$\pm$0.8\\LapPE&78.1$\pm$2.6&\bftab{67.7$\pm$2.8}&\utab{75.4$\pm$2.8}&76.2$\pm$1.5&75.0$\pm$1.2&60.7$\pm$0.9&\utab{77.4$\pm$0.4}&64.5$\pm$0.7\\RWSE&\utab{81.4$\pm$2.2}&\utab{66.3$\pm$2.0}&\utab{73.4$\pm$4.0}&\bftab{78.6$\pm$1.0}&\bftab{77.1$\pm$0.9}&59.2$\pm$0.8&76.0$\pm$0.7&63.9$\pm$0.5\\AllPSE&77.9$\pm$2.8&\utab{66.8$\pm$0.7}&72.0$\pm$3.5&74.7$\pm$1.7&66.9$\pm$0.5&\bftab{62.9$\pm$0.3}&76.0$\pm$0.6&64.1$\pm$0.4\\GPSE&\bftab{83.1$\pm$1.6}&\utab{67.3$\pm$1.0}&\bftab{76.7$\pm$4.6}&76.1$\pm$2.1&75.3$\pm$0.8&\utab{62.7$\pm$1.2}&\utab{77.2$\pm$0.6}&\utab{65.9$\pm$0.7}\\\GPSEm&81.6$\pm$1.2&\utab{67.6$\pm$1.1}&\utab{76.5$\pm$4.9}&76.7$\pm$1.9&\utab{76.2$\pm$1.7}&\utab{62.2$\pm$1.4}&\bftab{77.5$\pm$0.9}&\utab{65.7$\pm$0.6}\\\GPSEp&80.0$\pm$2.3&\utab{67.3$\pm$1.5}&\utab{76.7$\pm$2.4}&\utab{77.5$\pm$1.4}&76.0$\pm$1.3&\utab{61.6$\pm$2.6}&\utab{77.2$\pm$0.6}&\bftab{66.0$\pm$0.6}\\
\bottomrule
\end{tabular}}
\label{table:test_molnet}
\end{table}

\end{document}